%% file: iclr2027_conference.tex
\documentclass[11pt]{article}
\usepackage[
    top=0.8in,
    bottom=0.8in,
    left=0.8in,
    right=0.8in
]{geometry}

\usepackage{amsmath}
\usepackage{amssymb}
\usepackage{amsfonts}
\usepackage{subcaption}
\usepackage{graphicx}
\usepackage{booktabs}
\usepackage{multirow}
\usepackage{titletoc}

\usepackage{algorithm}
\usepackage{algorithmic}

\usepackage{xcolor}
\usepackage{minitoc}
\usepackage[numbers]{natbib}
\usepackage{hyperref}

\input{math_commands.tex}

\title{\textbf{Learning Operators for Symmetry Discovery: From Lie Groups to Local Symmetry Geometry}}

\author{
\textbf{Pavan Karjol$^1$ \quad
Scindhiya Laxmi$^2$ \quad
Prathosh AP$^1$}\\[4pt]
$^1$Department of Electrical Communication Engineering, 
Indian Institute of Science, Bangalore, India\\
$^2$Department of Mathematics and Computing, 
IIT (ISM) Dhanbad, India
}

\begin{document}

\maketitle

\begin{abstract}
Continuous symmetry discovery is typically posed as a separate estimation or
optimization problem for each new function or dataset. We instead ask whether
the map from a function to its symmetry structure can be inferred across a
family of functions from finite observations. Continuous symmetries admit an
infinitesimal differential characterization, and can therefore be expressed
through differential constraints on the function. This suggests an inverse
analogue of PDE solving: rather than starting from a differential operator and
solving for a compatible function, we observe the function and infer the
infinitesimal differential constraints that encode its symmetry. We formulate
this map as an inverse symmetry operator.

For global Lie-group symmetries, we show that this inverse map admits a
representation-theoretic population characterization. After decomposing the
function into representation coefficients, its symmetry Lie algebra is the
nullspace of a positive-semidefinite operator constructed from the
infinitesimal group action. The inverse symmetry operator is equivariant under
changes of coordinates through the corresponding adjoint action.

This construction separates the geometric symmetry map from the finite-sample
problem of estimating the quantities required to evaluate it. We therefore
amortize only those parts of the inference pipeline that benefit from
information shared across functions, while retaining analytic structure where
it is available. On challenging $\SO(5)$ benchmarks, amortized equivariant estimation of harmonic coefficients improves symmetry recovery from sparse
observations, whereas for $\SL(2,\mathbb R)$ ordinary least squares is already
sufficient.

We further extend the same viewpoint beyond global symmetries. When symmetry directions vary across the domain, they define a smoothly varying rank-\(k\) field of tangent subspaces, or equivalently a rank-$k$ tangent distribution, rather than a single fixed Lie algebra. If these local directions satisfy the appropriate Lie-bracket consistency condition, Frobenius' theorem guarantees that they locally form the tangent spaces of \(k\)-dimensional integral submanifolds along which the observables remain unchanged. We recover this geometry from finite samples using a hybrid estimator that learns the local weighting of observations while keeping the differential and geometric recovery steps analytic.

Across global, non-compact, local, and physically motivated synthetic
benchmarks, the results support a common principle: learn the statistically
difficult finite-sample component while preserving the known geometric map to
the symmetry object.
\end{abstract}

\input{Introduction}

\section{Related Work}
\label{rel_work}

\subsection{Continuous Symmetry Discovery}

A growing body of work learns continuous symmetries directly from data by
estimating infinitesimal generators or transformation parameters
\citep{zhou2020meta,dehmamy2021automatic,moskalev2022liegg,
yang2023generative,hu2025explicit}. These methods typically formulate symmetry
discovery as an optimization problem for a particular dataset or system. For
example, LieGAN learns Lie-algebra generators adversarially so that transformed
samples preserve the observed data distribution
\citep{yang2023generative}. Such approaches are well suited to discovering the
symmetry of an individual instance, but applying them across a family of
functions generally requires repeating the symmetry-estimation procedure for
each new realization. Our work instead asks whether information can be shared
across such a family by treating symmetry discovery itself as an inverse
operator.

\subsection{Operator Learning for Functional Ensembles}

Neural operators, including Fourier Neural Operators and DeepONets, learn maps
between function spaces and have proved effective for families of parametric
PDEs \citep{li2020fourier,lu2021learning,kovachki2023neural}. Their central
advantage is that inference for a new instance can reuse structure learned
across related functions rather than solving each problem independently.
Existing operator-learning methods, however, primarily predict scalar- or
vector-valued functions. Our output is instead a geometric object, such as a
Lie subalgebra or a field of tangent subspaces, whose transformation laws and
algebraic structure must be respected. We therefore use operator learning as a
viewpoint for amortizing the difficult finite-sample inference step while
retaining the known geometric map to the symmetry object.

\subsection{Local Symmetry and Nonlinear Generators}

Beyond global affine or linear transformations, recent methods have considered
nonlinear and spatially varying symmetry structure. Beyond-Affine
\citep{shaw2024symmetry} estimates tangent vector fields that annihilate
task-relevant functions and thereby represents continuous symmetries beyond the
affine group. This makes it a natural quantitative baseline for our local
experiments: its learned vector fields can be evaluated pointwise and compared
directly with the target tangent space
$\mathcal D_F(x)=\ker J_F(x)$.

AtlasD \citep{pmlr-v267-bhat25a} also studies local symmetry, but adopts a
different notion of locality. It formalizes local symmetry through atlas
equivariance, training localized predictors and discovering a Lie-group basis
under which those predictors are equivariant. The resulting object is therefore
an atlas-level symmetry group rather than a query-dependent tangent
distribution. Converting its fixed chart-coordinate generators into a
continuously varying field $x\mapsto\mathcal D_F(x)$ would require modifying
the published method, for example through chart-specific or nonlinear
generators. We therefore treat AtlasD as closely related work rather than as a
direct pointwise baseline.

Our local formulation instead targets the tangent distribution itself. When
this distribution has constant rank and is closed under Lie brackets,
Frobenius' theorem guarantees local integrability into submanifolds along which
the observables remain unchanged.

\input{proposed-method}

\input{Experiments-v1}
\vspace{-0.2cm}

\subsection{Limitations and Future Work}
\vspace{-0.2cm}

Our global construction assumes a representation in which the infinitesimal
group action is tractable; with truncated representations, recovery is exact
only when the retained coefficients distinguish the symmetry. The local method
also assumes a constant-rank regime, excluding singular points where symmetry
dimension changes. Finally, our physical experiments use synthetic systems
with known ground truth. Future work includes learned representations,
uncertainty-aware dimension selection, singular or stratified local symmetries,
higher-order jet constraints, and validation on real scientific data.
\vspace{-0.2cm}
\section{Conclusion}
\vspace{-0.2cm}
We formulated continuous symmetry discovery as an inverse operator problem.
For global symmetries, the symmetry Lie algebra is recovered analytically as a
stabilizer nullspace; for local symmetries, the target becomes a field of
tangent subspaces obtained from Jacobian nullspaces. Across our experiments,
the main lesson is consistent: learn the difficult finite-sample component,
while preserving the known geometric map to the symmetry structure.




\subsection*{AI use statement}

Generative AI and large language models were used to assist with literature
search and organization, coding and implementation, and editing and improving
the clarity of the manuscript. All literature claims and citations were checked
against the corresponding sources, and AI-assisted code was reviewed, tested,
and validated through the experiments and numerical checks reported in the
paper. All mathematical arguments, experimental design choices, results, and
final claims were reviewed by the authors. The authors take responsibility for
the complete content of the paper, including material produced with AI
assistance.

\subsection*{Ethics statement}

This work is methodological and uses synthetic datasets and simulated physical
benchmarks with analytically controlled symmetries. It does not involve human
subjects, personal data, or sensitive datasets. We are not aware of specific
ethical risks arising from the experiments considered here. As with other
methods for automated scientific inference, conclusions drawn from symmetry
discovery on real data should nevertheless account for modeling assumptions,
measurement noise, and possible misspecification.

\subsection*{Reproducibility statement}

We provide the mathematical assumptions and proofs in the appendix, together
with detailed descriptions of the benchmark construction, model architectures,
training procedures, optimization settings, evaluation metrics, data splits,
random seeds, and baseline implementations. Train, validation, development,
and final-test functions are generated from disjoint seeds, and model and
hyperparameter selection is performed without access to the final-test set.
Additional numerical results, ablations, and implementation diagnostics are
included in the supplementary material.

\bibliography{iclr2027_conference}
\bibliographystyle{iclr2027_conference}

\clearpage
\appendix

\section*{Appendix}

\startcontents[appendix]
\printcontents[appendix]{}{0}{}

\section{Notation}
\label{sec:notation}
For clarity and consistency, the main mathematical notation used throughout the manuscript is summarized in Table~\ref{tab:notation}.

\begin{table}[!t]
\centering
\caption{Summary of the notation used throughout the manuscript.}
\label{tab:notation}
\small
\begin{tabular}{ll}
\hline
\textbf{Notation} & \textbf{Description} \\
\hline
$G$ & Continuous transformation group acting on the domain \\
$\mathfrak{g}$ & Lie algebra of $G$ \\
$X$ & Domain/manifold on which $G$ acts \\
$x$ & Point in the domain $X$ \\
$g$ & Element of the transformation group $G$ \\
$f$ & Scalar-valued function whose symmetries are to be discovered \\
$H_f$ & Stabilizer subgroup of $f$ under the action of $G$ \\
$\mathfrak{h}_f$ & Lie algebra of continuous symmetries of $f$ \\
$\dim(\mathfrak{h}_f)$ & Dimension of the symmetry Lie algebra \\
$\operatorname{Ad}_g$ & Adjoint action of $g$ on $\mathfrak{g}$ \\
$\rho_{\lambda}$ & Representation associated with representation block $\lambda$ \\
$d\rho_{\lambda}$ & Derived representation acting on the Lie algebra \\
$V_{\lambda}$ & Representation space associated with $\lambda$ \\
$\Lambda$ & Set of representation/harmonic blocks used in the construction \\
$c_\lambda$ & Coefficient vector of $f$ in representation block $\lambda$ \\
$J_\lambda(c_\lambda)$ & Infinitesimal symmetry operator for block $\lambda$ \\
$J_\lambda^*$ & Adjoint of $J_\lambda$ \\
$K_\lambda$ & Block-wise positive-semidefinite stabilizer operator \\
$K_f$ & Global stabilizer operator, $\displaystyle K_f=\sum_{\lambda\in\Lambda}J_\lambda^*J_\lambda$ \\
$\ker K_f$ & Symmetry Lie algebra $\mathfrak{h}_f$ \\
$\mathcal{S}$ & Exact inverse symmetry operator, $f\mapsto\mathfrak{h}*f$ \\
$\mathcal{S}_{\theta}$ & Learned/amortized approximation of the inverse symmetry operator \\
$\theta$ & Trainable parameters of the learned symmetry operator \\
$\mathcal{D}_f$ & Finite set of observations/samples of $f$ \\
$\widehat{\mathfrak{h}}_f$ & Estimated symmetry Lie algebra from finite observations \\
$P_f$ & Orthogonal projector onto the true symmetry subspace $\mathfrak{h}_f$ \\
$\widehat{P}_f$ & Estimated symmetry projector \\
$d_f$ & True symmetry dimension, $d_f=\dim(\mathfrak{h}_f)$ \\
$\widehat d_f$ & Estimated symmetry dimension \\
$\mathcal{L}_v f$ & Lie derivative of $f$ along vector field $v$ \\
$v_{\xi}$ & Infinitesimal vector field generated by $\xi\in\mathfrak{g}$ \\
$A$ & Matrix representation of an infinitesimal linear transformation \\
$\mathcal{D}_F(x)$ & Local symmetry distribution at $x$, $\mathcal{D}*F(x)=\ker dF_x$ \\
$F$ & Vector-valued map used for local symmetry discovery \\
$dF_x$ & Differential (Jacobian map) of $F$ at $x$ \\
$J_F(x)$ & Jacobian matrix of $F$ at $x$ \\
$k$ & Dimension of the local symmetry distribution \\
$\mathcal{W}_{\theta}$ & Learned local symmetry/distribution estimator \\
$[X,Y]$ & Lie bracket of vector fields $X$ and $Y$ \\
$\operatorname{rank}dF_x$ & Local rank of $F$ at $x$ \\
$\mathrm{SO}(n)$ & Special orthogonal group in $n$ dimensions \\
$\mathrm{SL}(2,\mathbb{R})$ & Special linear group of $2\times2$ real matrices \\
\hline
\end{tabular}
\end{table}

\section{Mathematical Background}
\label{app:mathematical_background}

\subsection{Lie Groups and Lie Algebras}
\label{app:lie_groups}
A \emph{Lie group} is a smooth manifold $G$ equipped with a group structure
such that the group multiplication and inversion maps are smooth. Lie groups
provide a natural mathematical framework for continuous transformations such
as rotations, translations, and changes of coordinates. In this work, $G$
denotes a Lie group acting smoothly on an input space $X$.

Associated with a Lie group $G$ is its \emph{Lie algebra} $\mathfrak{g}$,
which is the tangent space of $G$ at the identity element $e$, $\mathfrak{g}=T_eG.$ 
Although elements of $G$ describe finite transformations, elements
$\xi\in\mathfrak{g}$ describe infinitesimal transformations. The Lie algebra
is equipped with the Lie bracket $[\xi,\eta],~ \xi,\eta\in\mathfrak{g},$ 
which characterizes the infinitesimal non-commutativity of the corresponding
transformations. A linear subspace $\mathfrak{h}\subseteq\mathfrak{g}$ satisfying
$[\mathfrak{h},\mathfrak{h}]\subseteq\mathfrak{h}$ is called a \emph{Lie subalgebra}.

In our setting, the unknown symmetry structure of a function is represented by
such a Lie subalgebra rather than by an arbitrary collection of transformations.

\subsection{Group Actions}
\label{app:group_actions}
Let $G$ be a Lie group and $X$ a smooth manifold. A \emph{smooth group
action} of $G$ on $X$ is a smooth map $G\times X\to X, ~(g,x)\mapsto g\cdot x,$ satisfying
$e\cdot x=x, ~
(g_1g_2)\cdot x
=
g_1\cdot(g_2\cdot x),
$
for all $g_1,g_2\in G$ and $x\in X$. A group action describes how transformations in $G$ modify points in the domain. 

It also induces an action on functions $f:X\to\mathbb{R}$ through the
pullback $(g\cdot f)(x)
=
f(g^{-1}\cdot x).$ With this convention, a function is invariant under a transformation $g$ if
$
g\cdot f=f,
$
or equivalently,
$
f(g\cdot x)=f(x)
\qquad
\text{for all }x\in X.
$ The induced action on functions is the basis for defining the symmetry
structure that we seek to recover from finite observations.

\subsection{Stabilizer and Symmetry Lie Algebra}
\label{app:stabilizer}

For a function $f:X\to\mathbb{R}$, its \emph{stabilizer} (or isotropy subgroup)
under the action of $G$ is $H_f
=
\{g\in G:\;g\cdot f=f\}.
$ Thus, $H_f$ consists of all transformations that leave $f$ unchanged. The
continuous component of this symmetry is described by the Lie algebra $
\mathfrak{h}_f
=
\operatorname{Lie}(H_f)
\subseteq\mathfrak{g}.
$  The dimension $
d_f=\dim(\mathfrak{h}_f)
$
therefore measures the number of independent infinitesimal symmetry
directions. This is the primary global symmetry object recovered by our
inverse symmetry operator,
\[
\mathcal{S}:f\mapsto\mathfrak{h}_f.
\]

\subsection{Infinitesimal Actions and Lie Derivatives}
\label{app:infinitesimal_symmetry}

An element $\xi\in\mathfrak{g}$ generates a one-parameter subgroup of $G$,
and its action on $X$ induces a vector field $v_\xi$ on the domain. The
corresponding infinitesimal change of a function $f$ is described by the Lie
derivative $\mathcal{L}_{v_\xi}f.$ A generator $\xi$ belongs to the symmetry Lie algebra if and only if the  function is invariant along the corresponding infinitesimal flow:
$$
\xi\in\mathfrak{h}_f
\quad\Longleftrightarrow\quad
\mathcal{L}_{v_\xi}f=0.
$$
For a linear action on $X\subseteq\mathbb{R}^n$, an infinitesimal generator can
be represented by a matrix $A$, with the induced vector field $v_A(x)=Ax.$ 
The infinitesimal invariance condition then becomes
\[
\mathcal{L}_{v_A}f(x)
=
\nabla f(x)^{\top}Ax
=
0,
\qquad
x\in X.
\]
Thus, symmetry discovery can be viewed as an inverse problem: given observations
of $f$, recover the infinitesimal generators that satisfy this differential
constraint.

\subsection{Adjoint Action and Equivariance}
\label{app:adjoint_action}

The Lie algebra transforms naturally under changes of coordinates induced by
elements of the group. For $g\in G$, the \emph{adjoint action} is the map
$\operatorname{Ad}_g:\mathfrak{g}\to\mathfrak{g},$ defined by differentiating conjugation,
$$
\operatorname{Ad}_g(\xi)
=
\left.
\frac{d}{dt}
\left(
g\exp(t\xi)g^{-1}
\right)
\right|_{t=0}.
$$

Under the pullback action on functions, the stabilizer is conjugated by $g$.
Consequently, the symmetry Lie algebra transforms according to $\mathcal{S}(g\cdot f)
=
\operatorname{Ad}_g\,\mathcal{S}(f).
$ Hence, the inverse symmetry operator is naturally equivariant: transforming
the input function transports its symmetry algebra according to the adjoint
action rather than leaving it fixed.

\subsection{Representation-Theoretic Description}
\label{app:representations}

Let the function space admit a decomposition into representation spaces
$V_\lambda$ associated with the action of $G$. We write the representation
coefficients of a function schematically as
$
f
\longleftrightarrow
\{c_\lambda\}_{\lambda\in\Lambda},
~
c_\lambda\in V_\lambda.
$

The infinitesimal action of $\xi\in\mathfrak{g}$ on a coefficient block is
given by the derived representation $d\rho_\lambda(\xi)c_\lambda.$
This defines the linear map $J_\lambda(c_\lambda):
\mathfrak{g}\to V_\lambda,
~
J_\lambda(c_\lambda)\xi
=
d\rho_\lambda(\xi)c_\lambda.
$ An infinitesimal generator is a symmetry of $f$ precisely when it stabilizes
every representation component. This motivates the positive-semidefinite
operator
\[
K_f
=
\sum_{\lambda\in\Lambda}
J_\lambda(c_\lambda)^*
J_\lambda(c_\lambda).
\]
Its nullspace characterizes the symmetry Lie algebra:
\[
\ker K_f
=
\bigcap_{\lambda\in\Lambda}
\ker J_\lambda(c_\lambda)
=
\mathfrak{h}_f.
\]
Thus, the geometric problem of identifying a Lie subalgebra can be reduced to
a finite-dimensional nullspace problem on $\mathfrak{g}$.

\subsection{Local Symmetry Distributions}
\label{app:local_distributions}

Global symmetry is described by a fixed Lie subalgebra
$\mathfrak{h}_f\subseteq\mathfrak{g}$. In a more general setting, the
directions along which observables remain locally unchanged may vary with
position. This motivates the notion of a \emph{distribution}.

Let $M$ be a smooth manifold and consider a vector-valued observable
$
F=(f_1,\ldots,f_m):M\to\mathbb{R}^m.
$
At a regular point $x\in M$, the local directions along which all components
remain unchanged to first order are
\[
\mathcal{D}_F(x)
=
\bigcap_{r=1}^{m}\ker df_r(x)
=
\ker dF_x,
\]
where
$
dF_x:T_xM\to\mathbb{R}^m
$
is the differential of $F$ at $x$. In local coordinates, this can be written as
$
\mathcal{D}_F(x)=\ker J_F(x),
$
where $J_F(x)$ is the Jacobian matrix of $F$. If $\operatorname{rank}dF_x=n-k,$ for an $n$-dimensional manifold, then $\dim\mathcal{D}_F(x)=k.$

This provides the local analogue of the global infinitesimal symmetry
condition. Instead of recovering one fixed Lie algebra,
$f\mapsto\mathfrak{h}_f,$ the local problem recovers a field of tangent subspaces, $F\mapsto\mathcal{D}_F(\cdot).$

\subsection{Involutivity and Frobenius Integrability}
\label{app:frobenius}

A distribution $\mathcal{D}$ is called \emph{involutive} if for any smooth
vector fields $X$ and $Y$ taking values in $\mathcal{D}$, their Lie bracket
also belongs to $\mathcal{D}$: $
X,Y\in\mathcal{D}
\Longrightarrow
[X,Y]\in\mathcal{D}.
$ For the common-kernel distribution defined above, suppose
$Xf_r=Yf_r=0
~
\text{for all }r.
$ Then
$
[X,Y]f_r
=
X(Yf_r)-Y(Xf_r)
=
0.
$ Hence $[X,Y]$ also belongs to the common kernel. Under the usual smoothness
and constant-rank assumptions, $\mathcal{D}_F$ is therefore involutive.

By the \emph{Frobenius theorem}, a regular involutive distribution is locally
integrable: through every point there exists a local submanifold whose tangent
space coincides with the distribution. Thus, the local nullspace construction
used in our framework has a corresponding geometric interpretation in terms
of locally invariant directions and integral manifolds.

\section{Proofs}
\label{sec:proofs}

\begin{proposition}
Let
$f\leftrightarrow\{c_\lambda\}_{\lambda\in\Lambda}$ be a
$G$-equivariant representation decomposition and define
\[
K_f=
\sum_{\lambda\in\Lambda}
J_\lambda(c_\lambda)^*J_\lambda(c_\lambda),
\qquad
J_\lambda(c_\lambda)\xi
=
d\rho_\lambda(\xi)c_\lambda.
\]
Then
\[
\ker K_f
=
\bigcap_{\lambda\in\Lambda}
\ker J_\lambda(c_\lambda)
=
\mathfrak h_f.
\]
\end{proposition}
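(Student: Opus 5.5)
The proof splits into two equalities, which I will establish separately: first $\ker K_f=\bigcap_\lambda\ker J_\lambda(c_\lambda)$, which is pure linear algebra, and then $\bigcap_\lambda\ker J_\lambda(c_\lambda)=\mathfrak h_f$, which uses the equivariance of the decomposition together with the infinitesimal characterization of stabilizers.

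\textbf{Step 1: the nullspace of $K_f$.} Fix inner products on $\mathfrak g$ and on each $V_\lambda$, so that the adjoints $J_\lambda^*$ are defined. For any $\xi\in\mathfrak g$,
\[
\langle \xi, K_f\xi\rangle=\sum_{\lambda\in\Lambda}\|J_\lambda(c_\lambda)\xi\|^2 .
\]
This shows $K_f$ is positive semidefinite. If $K_f\xi=0$, every term in the sum vanishes, so $\xi$ lies in every $\ker J_\lambda(c_\lambda)$. The reverse inclusion is immediate from the definition of $K_f$.

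If $\Lambda$ is infinite, I need the series defining $K_f$ to converge, say in the strong operator sense on the finite-dimensional space $\mathfrak g$. This holds when $\sum_\lambda\|J_\lambda(c_\lambda)\|^2<\infty$, which is implied by $f$ lying in the domain of the derived action, e.g.\ $f$ smooth. I will state this as a standing assumption. Under it, the quadratic-form identity still holds by passing to the limit.

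\textbf{Step 2: identifying the common kernel with $\mathfrak h_f$.} Write $\rho$ for the representation of $G$ on the function space. Equivariance of the decomposition means the coefficient map intertwines $\rho(g)$ with $\bigoplus_\lambda\rho_\lambda(g)$ and is injective. Hence, for $t\in\mathbb R$,
\[
\exp(t\xi)\cdot f=f \iff \rho_\lambda(\exp t\xi)c_\lambda=c_\lambda \ \text{for all }\lambda\in\Lambda .
\]

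\emph{Forward inclusion.} If $\xi\in\mathfrak h_f$, then $\exp(t\xi)\in H_f$ for all $t$, because $H_f$ is a closed subgroup (closed since $G$ acts continuously), so its Lie algebra is characterized by one-parameter subgroups. Differentiating $\rho_\lambda(\exp t\xi)c_\lambda=c_\lambda$ at $t=0$ gives $d\rho_\lambda(\xi)c_\lambda=0$.

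\emph{Reverse inclusion.} Suppose $d\rho_\lambda(\xi)c_\lambda=0$ for all $\lambda$. Set $u_\lambda(t)=\rho_\lambda(\exp t\xi)c_\lambda$. This curve satisfies the linear ODE $u_\lambda'(t)=\rho_\lambda(\exp t\xi)\,d\rho_\lambda(\xi)c_\lambda=0$, so $u_\lambda$ is constant and equal to $c_\lambda$. By the displayed equivalence, $\exp(t\xi)\cdot f=f$ for all $t$, so $\xi\in\operatorname{Lie}(H_f)=\mathfrak h_f$.

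\textbf{Main obstacle.} The delicate point is Step 2 when the decomposition is infinite or the blocks are infinite-dimensional, as happens for $\mathrm{SL}(2,\mathbb R)$. There, differentiating and the ODE argument require $c_\lambda$ to be a smooth vector for $\rho_\lambda$. I would handle this by assuming $f$ is a smooth vector, which makes each $c_\lambda$ smooth. For finite-dimensional blocks, as in the compact case, everything is automatic. Injectivity of the coefficient map is also essential: if the decomposition is truncated, only the inclusion $\mathfrak h_f\subseteq\bigcap_\lambda\ker J_\lambda$ survives, which is consistent with the paper's stated limitation about truncation.
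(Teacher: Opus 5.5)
Your proof is correct and has the same two-step structure as the paper. The first equality comes from the quadratic-form identity $\langle K_f\xi,\xi\rangle=\sum_\lambda\|J_\lambda(c_\lambda)\xi\|^2$, and the second from using equivariance of the decomposition to split the invariance condition blockwise. The difference is that the paper takes $\xi\in\mathfrak h_f\iff d\rho(\xi)f=0$ as definitional and then decomposes $d\rho(\xi)f$ into blocks, whereas you decompose at the group level and then differentiate (forward inclusion) or integrate $u_\lambda'(t)=0$ (reverse inclusion); this makes explicit the closedness of $H_f$ and the injectivity of $f\mapsto\{c_\lambda\}$ that the paper's argument uses tacitly.
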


\begin{proof}
We first show that
\[
\ker K_f
=
\bigcap_{\lambda\in\Lambda}
\ker J_\lambda(c_\lambda).
\]
For any $\xi\in\mathfrak{g}$,
\[
\begin{aligned}
\left\langle K_f\xi,\xi\right\rangle
&=
\sum_{\lambda\in\Lambda}
\left\langle
J_\lambda(c_\lambda)^*
J_\lambda(c_\lambda)\xi,\xi
\right\rangle \\
&=
\sum_{\lambda\in\Lambda}
\left\|
J_\lambda(c_\lambda)\xi
\right\|_{V_\lambda}^{2}.
\end{aligned}
\]
Every term in the sum is nonnegative. Hence,
\[
K_f\xi=0
\quad\Longleftrightarrow\quad
\left\langle K_f\xi,\xi\right\rangle=0
\quad\Longleftrightarrow\quad
J_\lambda(c_\lambda)\xi=0
\quad
\text{for every }\lambda\in\Lambda.
\]
Therefore,
\[
\ker K_f
=
\bigcap_{\lambda\in\Lambda}
\ker J_\lambda(c_\lambda).
\]

It remains to identify this common kernel with $\mathfrak h_f$. By definition, $\xi\in\mathfrak h_f
\quad\Longleftrightarrow\quad
\xi
\text{ generates an infinitesimal transformation that leaves }f
\text{ invariant}.
$
Equivalently, $d\rho(\xi)f=0,$ 
where $d\rho$ denotes the infinitesimal representation induced by the
$G$-action on the function space. Since the representation decomposition is
$G$-equivariant, $f\leftrightarrow\{c_\lambda\}_{\lambda\in\Lambda}$
and the infinitesimal action decomposes blockwise as 
$$d\rho(\xi)f
\leftrightarrow
\left\{
d\rho_\lambda(\xi)c_\lambda
\right\}_{\lambda\in\Lambda}.$$
Consequently,
\[
d\rho(\xi)f=0
\quad\Longleftrightarrow\quad
d\rho_\lambda(\xi)c_\lambda=0
\quad
\text{for every }\lambda\in\Lambda.
\]
Using the definition of $J_\lambda$, $
d\rho_\lambda(\xi)c_\lambda
=
J_\lambda(c_\lambda)\xi,$
and hence
$
\xi\in\mathfrak h_f
\Longleftrightarrow
\xi\in
\bigcap_{\lambda\in\Lambda}
\ker J_\lambda(c_\lambda).
$
Thus,
\[
\bigcap_{\lambda\in\Lambda}
\ker J_\lambda(c_\lambda)
=
\mathfrak h_f.
\]
Combining the two equalities gives
\[
\boxed{
\ker K_f
=
\bigcap_{\lambda\in\Lambda}
\ker J_\lambda(c_\lambda)
=
\mathfrak h_f
}.
\]
\end{proof}

\begin{proposition}[Equivariance of the inverse symmetry operator]
For every $g\in G$,
\[
\mathcal S(g\cdot f)
=
\operatorname{Ad}_g\,\mathcal S(f).
\]
\end{proposition}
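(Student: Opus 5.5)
The plan is to prove the equivariance at the group level first, by showing that the stabilizer subgroup is conjugated. Then I differentiate to get the statement for Lie algebras. I will also give an infinitesimal check through the representation-theoretic characterization of the previous proposition.

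\textbf{Step 1 (stabilizers are conjugate).} Using the pullback action $(g\cdot f)(x)=f(g^{-1}\cdot x)$, I will show $H_{g\cdot f}=gH_fg^{-1}$. For $h\in G$,
\[
h\cdot(g\cdot f)=g\cdot f
\iff (g^{-1}hg)\cdot f=f
\iff g^{-1}hg\in H_f .
\]
This uses only the action axioms $(g_1g_2)\cdot f=g_1\cdot(g_2\cdot f)$ on functions, which follow from those on $X$.

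\textbf{Step 2 (pass to Lie algebras).} Conjugation $C_g(h)=ghg^{-1}$ is a Lie group automorphism of $G$. It maps the closed subgroup $H_f$ onto $H_{g\cdot f}$. Its differential at $e$ is $\operatorname{Ad}_g$ by the definition recalled in the background section. I then use the characterization $\operatorname{Lie}(H)=\{\xi:\exp(t\xi)\in H\ \forall t\}$, together with the identity $g\exp(t\xi)g^{-1}=\exp(t\operatorname{Ad}_g\xi)$. These give
\[
\xi\in\mathfrak h_f \iff \exp(t\operatorname{Ad}_g\xi)\in H_{g\cdot f}\ \forall t \iff \operatorname{Ad}_g\xi\in\mathfrak h_{g\cdot f}.
\]
Hence $\mathcal S(g\cdot f)=\operatorname{Ad}_g\mathcal S(f)$.

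\textbf{Step 3 (consistency with $K_f$).} As a check, and to connect with the operator actually computed, I will verify the same identity through the previous proposition. Equivariance of the decomposition gives $c_\lambda(g\cdot f)=\rho_\lambda(g)c_\lambda(f)$. The intertwining relation $d\rho_\lambda(\operatorname{Ad}_g\xi)=\rho_\lambda(g)\,d\rho_\lambda(\xi)\,\rho_\lambda(g)^{-1}$ then yields
\[
J_\lambda\bigl(\rho_\lambda(g)c_\lambda\bigr)\operatorname{Ad}_g\xi=\rho_\lambda(g)\,J_\lambda(c_\lambda)\xi .
\]
Since $\rho_\lambda(g)$ is invertible, the common kernels correspond under $\operatorname{Ad}_g$.

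The main obstacle is the subtlety in Step 2 that $H_f$ may be a non-closed or non-smooth subgroup, so that $\operatorname{Lie}(H_f)$ is not obviously defined. I would handle this in one of two ways. The first is to note that $H_f$ is closed whenever $f$ is continuous and the action is continuous, so Cartan's theorem applies. The second is to define $\mathfrak h_f$ directly through the one-parameter subgroup criterion used above, which makes the argument independent of closedness.

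Note also that the $K_f$ route only gives unitary conjugation of the form when $\rho_\lambda$ is unitary. For the statement about kernels, invertibility of $\rho_\lambda(g)$ suffices, which covers non-compact groups such as $\mathrm{SL}(2,\mathbb R)$.
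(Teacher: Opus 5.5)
Your proof is correct and follows the paper's route. You show $H_{g\cdot f}=gH_fg^{-1}$ from the action axioms exactly as the paper does, and then pass to Lie algebras. Your one-parameter-subgroup argument via $g\exp(t\xi)g^{-1}=\exp(t\operatorname{Ad}_g\xi)$ spells out the ``standard relation between conjugation and the adjoint action'' that the paper only invokes. Your Step~3 check through $J_\lambda$ and your remark on the closedness of $H_f$ are sound additions that the paper does not include, but the argument does not need them.
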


\begin{proof}
Recall that the action of $G$ on functions is given by the pullback
\[
(g\cdot f)(x)=f(g^{-1}\cdot x),
\]
and define the stabilizer of $f$ by
\[
H_f=\{h\in G:h\cdot f=f\}.
\]
Let
\[
f'=g\cdot f.
\]
We first determine the stabilizer of $f'$. An element $h\in G$ belongs to
$H_{f'}$ if and only if
\[
h\cdot f'=f'.
\]
Using $f'=g\cdot f$, this is equivalent to
\[
h\cdot(g\cdot f)=g\cdot f.
\]
Since the action is a group action,
\[
(hg)\cdot f=g\cdot f.
\]
Applying $g^{-1}$ to both sides gives
\[
(g^{-1}hg)\cdot f=f.
\]
Therefore,
\[
g^{-1}hg\in H_f,
\]
or equivalently,
\[
h\in gH_fg^{-1}.
\]
Hence the stabilizers are related by conjugation:
\[
H_{g\cdot f}=gH_fg^{-1}.
\]

Taking Lie algebras and using the standard relation between conjugation and
the adjoint action gives
\[
\mathfrak h_{g\cdot f}
=
\operatorname{Lie}(H_{g\cdot f})
=
\operatorname{Lie}(gH_fg^{-1})
=
\operatorname{Ad}_g\mathfrak h_f.
\]
Since the inverse symmetry operator is defined by
\[
\mathcal S(f)=\mathfrak h_f,
\]
we obtain
\[
\boxed{
\mathcal S(g\cdot f)
=
\operatorname{Ad}_g\,\mathcal S(f)
}.
\]
Thus, the inverse symmetry operator is equivariant with respect to the
pullback action on functions and the adjoint action on the Lie algebra.
\end{proof}

\input{supplementary}





\end{document}

%% file: math_commands.tex
\usepackage{amsthm}

\newtheorem{theorem}{Theorem}[section]
\newtheorem{proposition}[theorem]{Proposition}

\usepackage{amsmath,amsfonts,bm}

\providecommand{\SO}{\mathrm{SO}}
\providecommand{\SL}{\mathrm{SL}}

\def\eqref#1{equation~\ref{#1}}

\def\1{\bm{1}}

\DeclareMathAlphabet{\mathsfit}{\encodingdefault}{\sfdefault}{m}{sl}
\SetMathAlphabet{\mathsfit}{bold}{\encodingdefault}{\sfdefault}{bx}{n}



%% file: Introduction.tex
\section{INTRODUCTION}
Deep learning models and scientific computing increasingly exploit continuous
symmetries to build representations that are both sample efficient and
physically consistent. Such symmetries encode structural invariances, including
rotations and changes of frame, that often reflect the underlying organization
of physical systems \citep{wang2020incorporating}. In PDEs and dynamical
systems, Lie-group symmetries support dimensional reduction, conserved
quantities, and exact integration techniques
\citep{olver1993applications,sjoberg2007double,brandstetter2022lie}.
Automatically discovering these symmetries from data is therefore an important
problem in scientific machine learning \citep{dehmamy2021automatic}.

Many recent symmetry-discovery methods, however, formulate inference
instance by instance, optimizing generators or transformation parameters
separately for each observed system. This becomes inefficient when one must
analyze a family of related functions, as in parametric PDEs with varying
coefficients, parameters, or boundary conditions \citep{gabel2024data}.
A new function typically requires a new symmetry-estimation procedure rather
than reusing information learned from previous instances. This motivates a
different question: can symmetry discovery itself be treated as a reusable map
from a sampled function to its continuous symmetry structure?

Operator learning provides a natural viewpoint for this question. In PDE
settings, neural operators learn maps from problem specifications, such as
coefficients, forcing terms, or boundary data, to solution functions across an
entire family of equations \citep{kovachki2023neural}. Continuous symmetry has
a closely related differential characterization. If a function is invariant
under a continuous transformation group, then its infinitesimal generators
define directions along which the function does not change:
$\mathcal L_v f=0$. For a linear action generated by $A$, this reduces to
$\nabla f(x)^\top A x=0$.

We therefore view symmetry discovery as an inverse operator problem. Rather
than starting from a known differential constraint and solving for a compatible
function, we observe a function through finite samples and infer the
infinitesimal transformations that leave it invariant. The resulting inverse
symmetry operator maps functions to their symmetry structure, while preserving
the geometric constraints that characterize that structure.

This viewpoint also imposes structure beyond ordinary function-to-function
operator learning. For global Lie-group symmetries, the recovered Lie algebra
must transform equivariantly under changes of coordinates. More generally, the
relevant symmetry directions need not be fixed across the domain. We therefore
extend the formulation to local symmetry geometry, where the target is a
smoothly varying rank-$k$ field of tangent subspaces. Under a Lie-bracket
closure condition, Frobenius' theorem guarantees that these local directions
integrate into $k$-dimensional submanifolds along which the observables remain
unchanged.

Our contributions are:
\begin{enumerate}
    \item We formulate continuous symmetry discovery as an inverse operator
    problem, mapping a function to its symmetry Lie algebra, and derive a
    representation-theoretic population construction together with its
    equivariance under the ambient group action.
    \item We develop a structured amortization principle for finite-sample
    symmetry discovery: learn only the statistically difficult part of the
    inference pipeline while retaining known analytic symmetry geometry.
    \item We extend the framework from global Lie symmetries to local symmetry
    distributions, characterized as common Jacobian nullspaces and connected to
    involutive distributions through Frobenius' theorem.
\end{enumerate}

%% file: proposed-method.tex
\section{Problem Statement}
Let \(G\) be a Lie group acting smoothly on an input space \(X\), with Lie algebra \(\mathfrak g\). This action induces the usual pullback action on functions \(f:X\to\mathbb R\), given by \((g\cdot f)(x)=f(g^{-1}\cdot x)\).
For a particular function \(f\), we are interested in the subgroup of transformations that leave it unchanged,
\[
H_f=\{g\in G:\ g\cdot f=f\}.
\]In other words, \(f\) is invariant under \(H_f\), i.e., \(f(h\cdot x)=f(x)\) for every \(h\in H_f\). Since our focus is on continuous symmetries, it is more convenient to work with the corresponding Lie algebra \(\mathfrak h_f=\operatorname{Lie}(H_f)\subseteq\mathfrak g\).
The problem we study is: given only finitely many samples of an unknown function,
\[
\mathcal D_f=\{(x_i,f(x_i))\}_{i=1}^N,
\]recover its continuous symmetry structure, namely the subalgebra \(\mathfrak h_f\) and its dimension.

\section{Global Symmetry Discovery as an Inverse Operator}
\label{sec:global_symmetry}

Operator learning is commonly used for families of differential equations,
where a map from the specification of a problem to its solution function is
learned across many related instances. We take a related viewpoint for symmetry
discovery. A continuous symmetry has an infinitesimal differential
characterization: if $v_\xi$ is the vector field induced by
$\xi\in\mathfrak g$, then
\[
\xi\in\mathfrak h_f
\qquad\Longleftrightarrow\qquad
\mathcal L_{v_\xi}f=0.
\]
For a linear action on $X\subseteq\mathbb R^n$, with
$v_A(x)=Ax$, this becomes
\[
\nabla f(x)^\top Ax=0
\qquad\text{for all }x\in X.
\]

Here the direction of inference is reversed: rather than starting from a known
differential constraint and solving for a compatible function, we observe the
function and seek the infinitesimal transformations that leave it invariant.
We therefore define the inverse symmetry operator
\[
\mathcal S:f\mapsto\mathfrak h_f.
\]
Our aim is to infer this map from finite observations across a family of
functions, rather than solve a separate symmetry-estimation problem for every
new function.

\subsection{Representation-theoretic construction}

We first consider the idealized setting in which the function $f$ is known through its coefficients in a basis adapted to the action of $G$. This separates the geometric problem of identifying the symmetry subgroup from the statistical problem of estimating the function from finitely many samples.

More precisely, let $G$ act on the domain, with Lie algebra $\mathfrak{g}$, and consider a decomposition of the corresponding function space into representation spaces $V_\lambda$. Analogous to expanding a function in an orthonormal basis, we may write $f$ in terms of basis functions grouped according to the representations of $G$. We denote by
\[
c_\lambda \in V_\lambda
\]
the collection of coefficients associated with the representation indexed by $\lambda$, and write schematically
\[
f \longleftrightarrow \{c_\lambda\}_\lambda.
\]

For example, for functions on the sphere $\mathbb{S}^2$ under the action of $\mathrm{SO}(3)$, the spherical harmonic expansion is
\[
f(x)
=
\sum_{\ell=0}^{\infty}
\sum_{m=-\ell}^{\ell}
c_{\ell m}Y_{\ell m}(x), \qquad f \longleftrightarrow \{c_{l}\}_{l},\: \text{ with } c_{l} = \{c_{lm}\}_m.
\]

In general, the infinitesimal action of $\xi \in \mathfrak{g}$ on a
coefficient vector $c_\lambda$ is therefore
$d\rho_\lambda(\xi)c_\lambda$, where $d\rho_\lambda$ denotes the derived
representation. This motivates the linear map
\[
J_\lambda(c_\lambda)
:
\mathfrak{g}
\rightarrow
V_\lambda,
\qquad
J_\lambda(c_\lambda)\xi
=
d\rho_\lambda(\xi)c_\lambda.
\]

An infinitesimal generator $\xi$ is a symmetry of $f$ precisely when it annihilates every representation component. This motivates the positive-semidefinite operator
\[
K_f
=
\sum_\lambda
J_\lambda(c_\lambda)^*
J_\lambda(c_\lambda)
:
\mathfrak{g}
\rightarrow
\mathfrak{g}.
\]
Rather than searching directly over Lie subalgebras, $K_f$ converts symmetry discovery into a nullspace problem on the finite-dimensional Lie algebra $\mathfrak{g}$.


For a representation vector $c$, the classical isotropy result
\cite{tanimura2000reduction} gives
\[
\ker\!\left(J(c)^*J(c)\right)=\mathfrak h_c.
\]
Applying this blockwise yields the following function-level result.

Note: We assume that the representation decomposition is $G$-equivariant and
that the operator sum defining $K_f$ is finite or converges in the relevant
operator topology.
\begin{proposition}
Let
$f\leftrightarrow\{c_\lambda\}_{\lambda\in\Lambda}$ be a
$G$-equivariant representation decomposition and define
\[
K_f=
\sum_{\lambda\in\Lambda}
J_\lambda(c_\lambda)^*J_\lambda(c_\lambda),
\qquad
J_\lambda(c_\lambda)\xi=d\rho_\lambda(\xi)c_\lambda.
\]
Then
\[
\ker K_f
=
\bigcap_{\lambda\in\Lambda}\ker J_\lambda(c_\lambda)
=
\mathfrak h_f.
\]
\end{proposition}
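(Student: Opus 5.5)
The plan is to prove the two equalities separately. The first, $\ker K_f=\bigcap_\lambda\ker J_\lambda(c_\lambda)$, is pure linear algebra. The second, $\bigcap_\lambda\ker J_\lambda(c_\lambda)=\mathfrak h_f$, uses the $G$-equivariance of the decomposition, together with a passage from the infinitesimal condition to the stabilizer subgroup $H_f$.

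\textbf{First equality.} Fix inner products on $\mathfrak g$ and on each $V_\lambda$, so that the adjoints $J_\lambda(c_\lambda)^*$ make sense. For $\xi\in\mathfrak g$, compute
\[
\langle K_f\xi,\xi\rangle=\sum_{\lambda\in\Lambda}\|J_\lambda(c_\lambda)\xi\|_{V_\lambda}^2 .
\]
The standing assumption that the sum converges in operator topology lets us exchange the sum and the inner product. If every $J_\lambda(c_\lambda)\xi$ vanishes, then each term of $K_f\xi$ vanishes, so $K_f\xi=0$. Conversely, if $K_f\xi=0$, the quadratic form is zero. Since it is a sum of nonnegative terms, each $J_\lambda(c_\lambda)\xi$ must vanish. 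This is just the blockwise version of $\ker(A^*A)=\ker A$.

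\textbf{Second equality.} Let $d\rho$ denote the derived action of $\mathfrak g$ on the function space, so that $d\rho(\xi)f=-\mathcal L_{v_\xi}f$. Equivariance of the decomposition means the coefficient map intertwines $\rho$ with $\bigoplus_\lambda\rho_\lambda$. Differentiating at the identity shows that it also intertwines $d\rho$ with $\bigoplus_\lambda d\rho_\lambda$. Because the coefficient map is injective, we get
\[
d\rho(\xi)f=0 \iff d\rho_\lambda(\xi)c_\lambda=J_\lambda(c_\lambda)\xi=0 \text{ for all } \lambda .
\]
It then remains to show that $d\rho(\xi)f=0$ if and only if $\xi\in\operatorname{Lie}(H_f)$.

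\textbf{From the infinitesimal condition to $H_f$.} For the forward direction, suppose $\xi\in\mathfrak h_f$. Then $\exp(t\xi)\in H_f$ for all $t$, so $\rho(\exp t\xi)f=f$, and differentiating at $t=0$ gives $d\rho(\xi)f=0$. For the reverse direction, suppose $d\rho(\xi)f=0$. Set $u(t)=\rho(\exp t\xi)f$. Then
\[
u'(t)=\rho(\exp t\xi)\,d\rho(\xi)f=0 ,
\]
so $u$ is constant. Hence $\exp(t\xi)\in H_f$ for all $t$. Since $H_f$ is a closed subgroup, $\operatorname{Lie}(H_f)=\{\xi:\exp(t\xi)\in H_f\ \forall t\}$, and therefore $\xi\in\mathfrak h_f$.

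\textbf{Main obstacle.} The hardest step is making this integration rigorous. We need $f$ to be a smooth vector, or the coefficient sequence to lie in a domain on which $t\mapsto\rho(\exp t\xi)f$ is differentiable, so that $u'(t)$ is defined. We also need $H_f$ to be closed, which holds if the action on the function space is continuous. My first attempt would be to work blockwise, since each $V_\lambda$ is finite-dimensional. There $\rho_\lambda(\exp t\xi)=e^{t\,d\rho_\lambda(\xi)}$, so $d\rho_\lambda(\xi)c_\lambda=0$ directly gives $\rho_\lambda(\exp t\xi)c_\lambda=c_\lambda$ for every $\lambda$. Injectivity of the equivariant decomposition then yields $\exp(t\xi)\cdot f=f$, which avoids infinite-dimensional analysis entirely. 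This is the paper's earlier isotropy fact $\ker(J(c)^*J(c))=\mathfrak h_c$ from \cite{tanimura2000reduction}, applied to each block and then intersected.
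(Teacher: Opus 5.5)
Your proposal is correct and follows the same route as the paper. The first equality comes from $\langle K_f\xi,\xi\rangle=\sum_{\lambda}\|J_\lambda(c_\lambda)\xi\|^2$, and the second from the $G$-equivariant blockwise splitting of $d\rho(\xi)f$. The one difference is that the paper takes $\xi\in\mathfrak h_f\iff d\rho(\xi)f=0$ essentially as a definition, whereas you justify it by exponentiation. Your blockwise finite-dimensional version is the cleanest way to do this, and it even gives $H_f=\bigcap_\lambda\{g:\rho_\lambda(g)c_\lambda=c_\lambda\}$, which is closed automatically without any continuity assumption on the action on the function space.
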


The above result conveys that, if \(f\) decomposes into coefficient blocks \(\{c_\lambda\}\), then a Lie algebra element is a symmetry of \(f\) exactly when it stabilizes every block simultaneously. Summing the corresponding Gram operators therefore turns the intersection of these blockwise isotropy algebras into the nullspace of a single operator \(K_f\). The remaining challenge is to estimate from finite observations the quantities
needed to evaluate this map reliably.

\subsection{Equivariance of the inverse symmetry operator}
\label{sec:s_equivariance}

The inverse symmetry operator should transform consistently with the ambient
group action. For the pullback action
\[
(g\cdot f)(x)=f(g^{-1}x),
\]
the symmetry subgroup is conjugated by $g$, and hence the symmetry algebra
transforms by the adjoint action.

\begin{proposition}[Equivariance of the inverse symmetry operator]
For every $g\in G$,
\[
\mathcal S(g\cdot f)
=
\operatorname{Ad}_g\,\mathcal S(f).
\]
\end{proposition}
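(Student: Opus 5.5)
The plan is to prove the statement in two stages. First I establish the group-level identity $H_{g\cdot f}=gH_fg^{-1}$. Then I pass to Lie algebras using the definition of $\operatorname{Ad}_g$ as the differential of conjugation.

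For the first stage I work directly from the definition $H_f=\{h\in G:\ h\cdot f=f\}$ and the pullback action $(g\cdot f)(x)=f(g^{-1}\cdot x)$. I first check that this is a genuine left action, i.e.\ $h\cdot(g\cdot f)=(hg)\cdot f$; this is immediate since $(hg)^{-1}=g^{-1}h^{-1}$. Then for $h\in G$ the chain
\[
h\cdot(g\cdot f)=g\cdot f \iff (g^{-1}hg)\cdot f=f \iff g^{-1}hg\in H_f
\]
gives $H_{g\cdot f}=gH_fg^{-1}$. This step is purely algebraic.

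For the second stage I need $\operatorname{Lie}(gH_fg^{-1})=\operatorname{Ad}_g\operatorname{Lie}(H_f)$. I use the characterization of the Lie algebra of a closed subgroup $H\subseteq G$:
\[
\operatorname{Lie}(H)=\{\xi\in\mathfrak g:\ \exp(t\xi)\in H\ \text{for all } t\in\mathbb R\}.
\]
The stabilizer $H_f$ is closed whenever $f$ is, say, continuous, because the action on functions is then pointwise continuous in $g$. Combined with the identity $g\exp(t\xi)g^{-1}=\exp(t\operatorname{Ad}_g\xi)$, which follows by differentiating the one-parameter subgroup $t\mapsto g\exp(t\xi)g^{-1}$ at $t=0$ and using uniqueness of one-parameter subgroups, this yields
\[
\eta\in\operatorname{Lie}(gH_fg^{-1}) \iff \exp(t\operatorname{Ad}_{g^{-1}}\eta)\in H_f\ \ \forall t \iff \operatorname{Ad}_{g^{-1}}\eta\in\mathfrak h_f \iff \eta\in\operatorname{Ad}_g\mathfrak h_f .
\]
Substituting $\mathcal S(f)=\mathfrak h_f$ then finishes the proof.

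The main obstacle is the closedness of $H_f$, which is what lets us use the exponential characterization of its Lie algebra. If closedness is in doubt, I would switch to the infinitesimal route and avoid closedness entirely. There one uses $\xi\in\mathfrak h_f \iff \mathcal L_{v_\xi}f=0$ together with the naturality relation $v_{\operatorname{Ad}_g\xi}=g_*v_\xi$ and the pullback identity $\mathcal L_{g_*v}(g\cdot f)=g\cdot(\mathcal L_v f)$. These give $\mathcal L_{v_{\operatorname{Ad}_g\xi}}(g\cdot f)=0\iff\mathcal L_{v_\xi}f=0$, which is the same conclusion.
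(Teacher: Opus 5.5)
Your proposal is correct and follows the paper's proof: the paper derives $H_{g\cdot f}=gH_fg^{-1}$ by the same algebraic chain and then passes to Lie algebras. The only difference is that the paper just cites ``the standard relation between conjugation and the adjoint action'' for $\operatorname{Lie}(gH_fg^{-1})=\operatorname{Ad}_g\mathfrak h_f$. You instead justify that step explicitly via $g\exp(t\xi)g^{-1}=\exp(t\operatorname{Ad}_g\xi)$ and the one-parameter-subgroup characterization, using closedness of $H_f$ for continuous $f$, which fills in detail the paper leaves implicit.
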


Thus a change of coordinates transports the recovered symmetry algebra rather
than leaving it invariant.

\subsection{Finite-sample realization}
\label{sec:finite_sample_global}

The preceding construction assumes access to the relevant representation of
$f$. In practice, we observe only
\[
\mathcal D_f=\{(x_i,f(x_i))\}_{i=1}^N,
\]
so the main difficulty is estimating from finite data the quantities required
by the analytic symmetry construction.

We therefore separate the finite-sample inference problem from the geometric
symmetry map. Some quantities can be estimated reliably by standard numerical
methods, while others may be noisy, poorly conditioned, or require
finite-sample calibration. We amortize only those parts that benefit from
information shared across functions, while retaining analytic structure
wherever it is available.

Schematically,
\[
\mathcal D_f
\longrightarrow
\text{finite-sample estimate}
\longrightarrow
\text{analytic symmetry construction}
\longrightarrow
\widehat{\mathfrak h}_f.
\]
The precise division between learned and analytic components depends on the
problem.

\subsection{$\mathrm{SO}(5)$: amortized harmonic inference}
\label{sec:so5_method}

For $G=\mathrm{SO}(5)$, we represent the angular dependence of $f$ using
hyperspherical-harmonic blocks
\[
f\longleftrightarrow\{c_\ell\}_{\ell\in\Lambda},
\]
where each $c_\ell$ transforms under the degree-$\ell$ representation of
$\mathrm{SO}(5)$.
The corresponding stabilizer operator is
\[
K_f
=
\sum_{\ell\in\Lambda}
J_\ell(c_\ell)^*J_\ell(c_\ell).
\]

We first consider a structured baseline in which the harmonic coefficients are
estimated independently for each function and only the finite-sample spectral
calibration is learned across functions.

For the harder setting, we additionally amortize the coefficient-estimation
step itself:
\[
\mathcal D_f
\xrightarrow{\mathcal E_\theta}
\{\widehat c_\ell\}_{\ell\in\Lambda}
\longrightarrow
\widehat K_f
\longrightarrow
\widehat{\mathfrak h}_f.
\]
The estimator $\mathcal E_\theta$ is equivariant at the level of each harmonic
block, while the stabilizer construction remains analytic.
Architecture and training details are given in
Appendix~\ref{app:so5_details}.

\subsection{$\mathrm{SL}(2)$: an analytic inverse}
\label{sec:sl2_method}

The framework does not require learning when the finite-sample inverse is
already well conditioned. For the $\mathrm{SL}(2)$ benchmark, $f$ is represented
in a finite polynomial basis whose coefficients are recovered by least squares.
The symmetry algebra then follows from the corresponding infinitesimal
stabilizer:
\[
\mathcal D_f
\longrightarrow
\widehat c^{\,\mathrm{LS}}
\longrightarrow
\widehat K_f
\longrightarrow
\widehat{\mathfrak h}_f.
\]
This provides a control showing that amortization is introduced only when it
improves finite-sample inference.

\section{Local Symmetry Discovery}
\label{sec:local_symmetry}

Global Lie-group symmetry is described by a fixed Lie subalgebra
$\mathfrak h_f\subseteq\mathfrak g$. More generally, the directions along
which a collection of observables remains locally unchanged may vary with
position. The resulting symmetry object is a field of tangent subspaces
$x\mapsto\mathcal D_F(x)\subseteq T_xM$. This leads to a second inverse
problem: recover this local symmetry geometry from finite observations.

\subsection{Local symmetry distributions}

Let $F=(f_1,\ldots,f_m):M\to\mathbb R^m$. At a regular point $x\in M$, define
\[
\mathcal D_F(x)
=
\bigcap_{r=1}^m \ker df_r(x)
=
\ker dF_x
=
\ker J_F(x),
\]
where $J_F(x)$ denotes the Jacobian of $F$ in local coordinates.

This is the local analogue of the global infinitesimal symmetry condition.
For a global linear generator $A$, invariance requires
$\nabla f(x)^\top Ax=0$ for all $x$. In the local setting, the symmetry
direction itself may vary with position and satisfies
$J_F(x)v(x)=0$.

\subsection{Identifiability and involutivity}

A single regular scalar observable identifies only the codimension-one space
$\ker df_x$, so recovering a lower-dimensional symmetry field generally
requires multiple observables. If $\operatorname{rank}dF_x=n-k$, then
$\dim\mathcal D_F(x)=k$.

The common-kernel construction also has a useful geometric consequence.
If vector fields $X$ and $Y$ satisfy $Xf_r=Yf_r=0$ for every component $f_r$,
then
\[
[X,Y]f_r
=
X(Yf_r)-Y(Xf_r)
=
0.
\]
Hence, under the constant-rank assumption, $\mathcal D_F$ is closed under Lie
brackets and is therefore involutive. By Frobenius' theorem, it locally forms
the tangent spaces of $k$-dimensional integral submanifolds along which all
components of $F$ remain constant.

\subsection{Finite-sample realization}

At population level, local symmetry recovery follows the explicit map
\[
F \longmapsto J_F(x) \longmapsto \mathcal D_F(x).
\]
With finite observations
$\mathcal O_F=\{(x_i,F(x_i))\}_{i=1}^N$, the main difficulty is estimating
$J_F(x)$ reliably.

A fully analytic estimator uses weighted local polynomial regression to obtain
$\widehat J_F(x)$ and then recovers the symmetry subspace from its nullspace:
\[
(\mathcal O_F,x)
\longrightarrow
\widehat J_F(x)
\longrightarrow
\widehat{\mathcal D}_F(x).
\]
We additionally consider a hybrid amortized estimator that learns only the
sample weights used in this local regression,
\[
(\mathcal O_F,x)
\xrightarrow{\mathcal W_\theta}
\{w_i\}_{i=1}^N
\longrightarrow
\widehat J_F(x)
\longrightarrow
\widehat{\mathcal D}_F(x).
\]
Thus the learned component improves the finite-sample differentiation step
without directly predicting the tangent subspace; the Jacobian-nullspace map
remains analytic.

The global and local settings therefore share the same inverse viewpoint but
have different outputs: $f\mapsto\mathfrak h_f$ in the global case and
$F\mapsto\mathcal D_F(\cdot)$ in the local case. The latter is a field of
tangent subspaces rather than a single fixed Lie algebra.

%% file: Experiments-v1.tex
\section{Experiments and Results}
\label{sec:experiments}


We evaluate the framework on global $\SO(5)$ symmetry discovery,
non-compact $\SL(2,\mathbb R)$ symmetries, spatially varying local symmetries,
and two physically motivated synthetic energy families. We report
symmetry-dimension and subspace-recovery metrics, using disjoint
train/validation/development/test functions and validation-only model
selection. Additional benchmarks, ablations, and implementation details are
given in Appendix~D.

For global symmetries, we report symmetry-dimension accuracy, hard-projector
error, oracle-dimension projector error, and principal-angle error. The
oracle-dimension metric evaluates the recovered subspace while supplying the
true symmetry dimension, thereby separating geometric recovery from dimension
selection. For local symmetries, we report projector error and largest principal
angle. All train, validation, development, and test splits are disjoint at the
level of generated functions. Hyperparameters and spectral thresholds are
selected using validation data only. For the principal final evaluations,
predictions are frozen before the held-out oracle quantities are used for
scoring. Additional $\SO(3)$ and $\SO(4)$ benchmarks, semi-supervised
experiments, ablations, and implementation details are deferred to the
appendix.

\subsection{Global symmetry discovery on $\SO(5)$}
\label{sec:exp_so5}

Our main global benchmark consists of functions on $S^4$ with stabilizer
dimensions $d_f\in\{0,1,2,3,4,10\}$. The nine classes include trivial,
one-dimensional, product, embedded $\SO(3)$, $\SO(3)\times\SO(2)$, and full
$\SO(5)$ stabilizers, together with three coupled one-generator classes with
relative rotation rates $1{:}1$, $1{:}2$, and $2{:}3$. All nontrivial
stabilizers are randomly conjugated, and the identifying angular information
extends to higher-order hyperspherical-harmonic blocks.

We compare four finite-sample estimators: independent analytic harmonic fitting,
a Amortized Harmonic Calibration-only model (AHC), the proposed Amortized Equivariant Harmonic Estimator (AEHE), and a Beyond-Affine/Killing-field baseline (BA). AEHE is trained on
$40{,}005$ functions and evaluated on a disjoint $2{,}016$-function final set.
Population coefficients recover the correct stabilizer for every benchmark
class.

Figure~\ref{fig:main_results}(a,b) shows that AEHE is substantially more
reliable for symmetry-dimension inference in the sparse regime and continues
to improve with sample count. Beyond-Affine becomes geometrically competitive
once its polynomial surrogate is sufficiently determined, but its
hard-projector error eventually plateaus. AEHE also improves over amortizing harmonic calibration (AHC) alone, indicating that the main finite-sample bottleneck in this benchmark is coefficient estimation
rather than the analytic stabilizer map.


The coupled one-dimensional classes provide an additional diagnostic because
the relative rotation rates must be recovered, not merely the symmetry
dimension. At the largest sample count, all three coupled classes recover the
correct dimension and achieve low generator-angle error. Detailed per-class
angles, recovered rate ratios, and the LieGAN comparison are reported in
Appendix~\ref{app:liegan_comparison}.

\begin{figure}[t]
    \centering
    \begin{subfigure}[t]{0.45\textwidth}
        \centering
        \includegraphics[width=\linewidth]
        {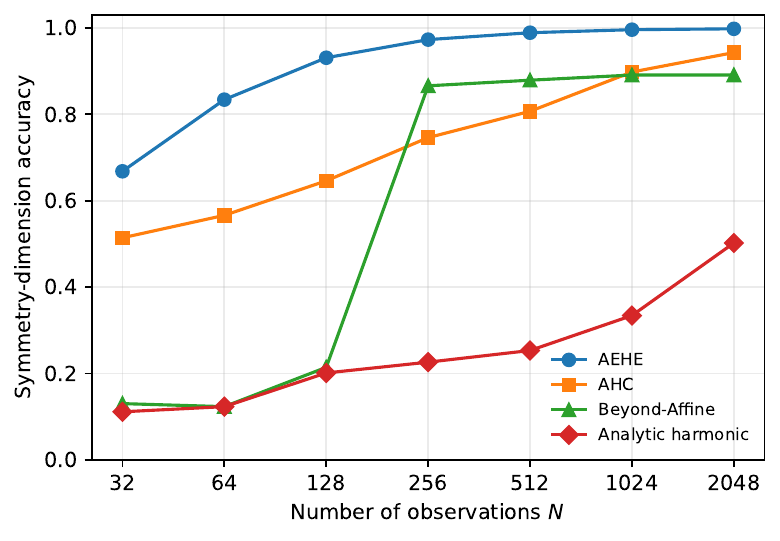}
        \caption{$\SO(5)$ symmetry-dimension accuracy.}
        \label{fig:so5_dim_curve}
    \end{subfigure}
    \hfill
    \begin{subfigure}[t]{0.45\textwidth}
        \centering
        \includegraphics[width=\linewidth]
        {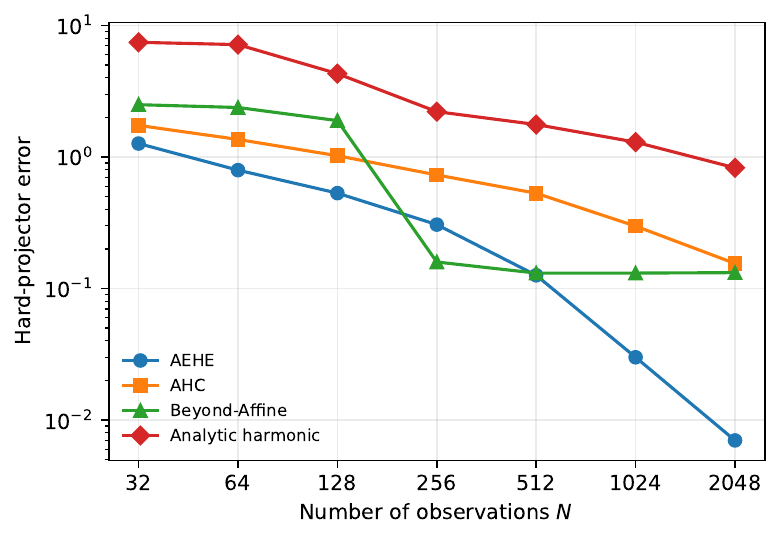}
        \caption{$\SO(5)$ hard-projector error.}
        \label{fig:so5_proj_curve}
    \end{subfigure}

    \vspace{0.5em}

    \begin{subfigure}[t]{0.45\textwidth}
        \centering
        \includegraphics[width=\linewidth]
        {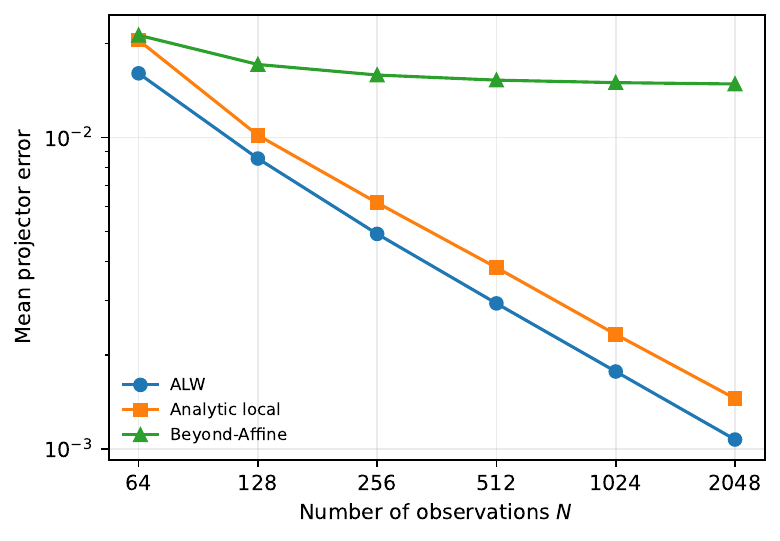}
        \caption{Local-symmetry projector error.}
        \label{fig:local_proj_curve}
    \end{subfigure}
    \hfill
    \begin{subfigure}[t]{0.45\textwidth}
        \centering
        \includegraphics[width=\linewidth]
        {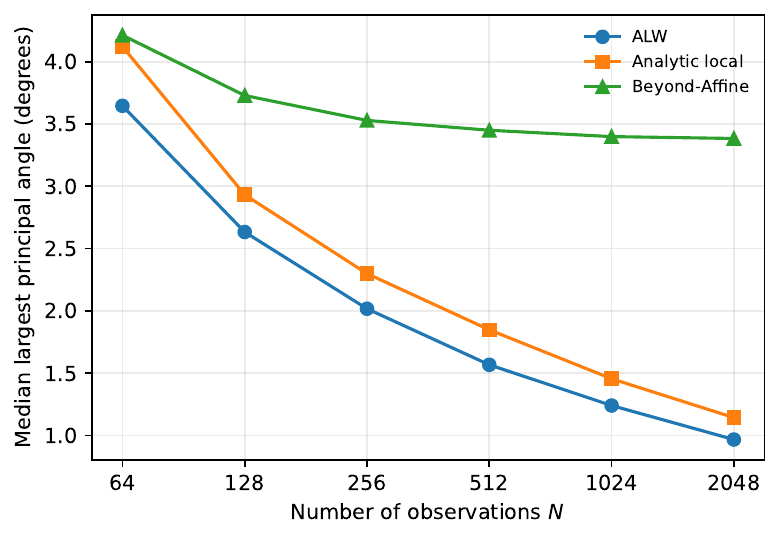}
        \caption{Local-symmetry principal-angle error.}
        \label{fig:local_angle_curve}
    \end{subfigure}

    \caption{
    Finite-sample symmetry recovery.
    Top: global $\SO(5)$ symmetry discovery.
    Bottom: recovery of spatially varying local symmetry distributions.
    Amortization is most effective when applied to the statistically difficult
    finite-sample inference step while retaining the analytic symmetry
    construction.
    }
    \label{fig:main_results}
\end{figure}


\paragraph{Beyond compact groups.}
To verify that the stabilizer construction is not specific to compact
orthogonal groups, we also evaluate it on an $\SL(2,\mathbb R)$ benchmark with
elliptic, hyperbolic, parabolic, and trivial stabilizers. In this setting,
finite polynomial coefficients can be recovered reliably by ordinary least
squares, after which the symmetry algebra follows analytically from the
infinitesimal representation. The experiment therefore serves as a control:
the geometric construction extends beyond $\SO(n)$, and amortized learning is
unnecessary when the finite-sample inverse is already well conditioned.
Full results are given in Appendix~\ref{app:sl2_experiment}.

\subsection{Local symmetry distributions}
\label{sec:exp_local}

We next evaluate the extension from a single global Lie algebra to a
position-dependent local symmetry distribution
\[
\mathcal D_F(x)=\ker dF_x.
\]
The hard benchmark uses
$
F:\mathbb R^6\rightarrow\mathbb R^4,
\text{ and }
\dim\mathcal D_F(x)=2,
$
with function-specific nonlinear diffeomorphisms producing strongly curved
rank-two tangent distributions. The dataset contains
$5{,}000/512/512$ train/validation/test functions.

We compare three main approaches: (i) \emph{Analytic local}, which uses fixed
weighted local quadratic regression followed by an analytic
Jacobian-nullspace computation; (ii) Beyond-Affine, using two global
polynomial vector fields; and (iii) \emph{Amortized Local Weighting} (ALW),
which learns query-dependent sample weights while retaining the local
polynomial solve, Jacobian estimation, and nullspace computation analytically.

As shown in Figure~\ref{fig:local_proj_curve}-~\ref{fig:local_angle_curve} and Table~\ref{tab:so5_full_final}, the hybrid estimator consistently
improves over fixed analytic differentiation across all sample sizes, both in
projector error and principal-angle recovery. Beyond-Affine is competitive in
the sparsest regime but plateaus as the target distributions become more
nonlinear, whereas the analytic and hybrid estimators continue to improve with
additional observations.

We additionally evaluate direct neural projector, which  performs substantially worse and is reported
in the appendix. The comparison supports the same design principle observed in
the global setting: learning is most effective when applied to the difficult
finite-sample estimation step, while the known geometric map from the estimated
Jacobian to the symmetry subspace is retained analytically. 


\subsection{Physically motivated synthetic applications}
\label{sec:exp_physical}

We finally evaluate the same framework on two simulated physical
proof-of-concept families with exactly known continuous stabilizers. These are
not intended as full micromagnetic or quantum-material simulations; rather,
they provide physically interpretable energy landscapes for which the symmetry
ground truth remains analytically controlled.

\paragraph{$\SO(4)$ magnetic orientation energy.}
A rigid magnetic orientation is represented by a unit quaternion
$q\in S^3$, with $q\sim -q$. We construct orientation-dependent energies
$E:S^3/\{\pm1\}\to\mathbb R$ whose left--right quaternion symmetries realize
stabilizer dimensions $0,1,2,3,$ and $6$. Subgroup orientations are randomized
across functions.

\paragraph{$\SO(5)$ superspin effective free energy.}
We similarly consider a normalized five-component superspin
$n=(N_x,N_y,N_z,V_x,V_y)\in S^4$ and synthetic Landau-type angular free
energies with stabilizer dimensions $0,1,2,3,4,$ and $10$. The
dimension-four class realizes $\SO(3)_N\times\SO(2)_V$, while the fully
isotropic limit has exact $\SO(5)$ symmetry. Oracle infinitesimal checks verify
all claimed stabilizers.

\begin{table}[t]
\centering
\caption{
Physically motivated synthetic applications.
Entries report dimension accuracy / hard-projector error.
We show a sparse noisy setting and a well-sampled clean setting;
full sample-size and noise sweeps are reported in the appendix.
}
\label{tab:physical_main}
\small
\setlength{\tabcolsep}{4pt}

\begin{tabular*}{\linewidth}{@{\extracolsep{\fill}}lcc|cc@{}}
\toprule
& \multicolumn{2}{c|}{$N=32,\ \sigma=.05$}
& \multicolumn{2}{c}{$N=1024$, clean} \\
Method
& $\SO(4)$ & $\SO(5)$
& $\SO(4)$ & $\SO(5)$ \\
\midrule
AEHE
& .869/.598 & .586/1.410
& .961/.179 & .892/.135 \\

AHC
& .948/.417 & .546/1.452
& 1.000/.00241 & .998/.0126 \\

Analytic harmonic
& .200/2.400 & .167/3.333
& 1.000/$4.9{\times}10^{-7}$
& 1.000/$4.0{\times}10^{-17}$ \\

Beyond-Affine
& .288/1.886 & .318/2.563
& 1.000/$2.09{\times}10^{-5}$
& 1.000/$3.71{\times}10^{-6}$ \\
\bottomrule
\end{tabular*}
\end{table}

\begin{table}[t]
\centering
\caption{
Oracle-dimension geometry at $N=1024$ and $\sigma=.05$.
Supplying the true symmetry dimension isolates geometric subspace recovery
from dimension selection.
}
\label{tab:physical_oracle}
\small
\setlength{\tabcolsep}{4pt}

\begin{tabular*}{\linewidth}{
@{\extracolsep{\fill}}
l
cc
cc
@{}
}
\toprule
& \multicolumn{2}{c}{$\SO(4)$ magnetic}
& \multicolumn{2}{c}{$\SO(5)$ superspin} \\
Method
& Proj. & Angle
& Proj. & Angle \\
\midrule
AEHE
& .1499 & $12.53^\circ$
& .0109 & $4.03^\circ$ \\

AHC
& .00244 & $1.98^\circ$
& .0112 & $4.08^\circ$ \\

Beyond-Affine
& $5.36{\times}10^{-5}$ & $0.19^\circ$
& $3.66{\times}10^{-4}$ & $0.51^\circ$ \\
\bottomrule
\end{tabular*}
\end{table}

Table~\ref{tab:physical_main} shows a clear regime dependence. With clean,
well-sampled measurements, direct polynomial and harmonic estimators can exploit
the low-degree structure of these synthetic energy families and recover the
symmetry geometry almost exactly. Under observation noise, however, their
automatic dimension-selection rules become brittle, particularly for the
full-symmetry classes. Amortized harmonic calibration (AHC) is substantially more
robust in this regime and provides reliable end-to-end symmetry recovery.

The oracle-dimension results in Table~\ref{tab:physical_oracle} clarify the
failure mode. Beyond-Affine continues to estimate the nontrivial symmetry
subspace very accurately under noise; most of its end-to-end error arises from
selecting the wrong symmetry dimension. Thus the benefit of learned calibration
is primarily robust finite-sample dimension inference rather than uniformly
better subspace geometry.
\vspace{-0.6cm}
\paragraph{Controlled symmetry enhancement.}
For the superspin benchmark we additionally consider
$F_\epsilon=F_{\SO(5)}+\epsilon A$, where $A$ breaks the full symmetry to
$\SO(3)\times\SO(2)$. The exact stabilizer dimension is four for every
$\epsilon\neq0$ and jumps to ten only at $\epsilon=0$. Nevertheless, the
symmetry-breaking spectrum varies continuously: the first nonzero physics-Gram
eigenvalue scales as $\epsilon^2$, while the learned soft full-symmetry score
approaches one as $\epsilon\to0$. We therefore interpret the learned score as
an indicator of approximate symmetry enhancement, not as a continuously varying
exact symmetry dimension.

%% file: supplementary.tex
\subsection{Architecture and training details for \texorpdfstring{$\mathrm{SO}(5)$}{SO(5)}}
\label{app:so5_details}

\paragraph{Harmonic representation.}
We use the real hyperspherical-harmonic degrees
\[
\Lambda=\{2,4,5,6\}.
\]
Degrees \(1\) and \(3\) are excluded because they vanish identically for the
frozen benchmark family.  For \(S^4\), the corresponding representation
dimensions are
\[
\dim\mathcal H_2=14,~
\dim\mathcal H_4=55,~
\dim\mathcal H_5=91,~
\dim\mathcal H_6=140.
\]
The Lie algebra \(\mathfrak{so}(5)\) is represented in a fixed
Frobenius-orthonormal bivector basis of dimension \(10\).  For each harmonic
coefficient vector \(c_\ell\), we form
\[
J_\ell(c_\ell)
=
\begin{bmatrix}
L^{(\ell)}_1c_\ell &
\cdots &
L^{(\ell)}_{10}c_\ell
\end{bmatrix},
\]
where \(L^{(\ell)}_j\) is the infinitesimal degree-\(\ell\) representation of
the \(j\)-th Lie-algebra basis element.  The degree-wise stabilizer metric is
\[
A^{(\ell)}
=
J_\ell(c_\ell)^\top J_\ell(c_\ell),
\qquad
A^{(\ell)}_{jk}
=
\left\langle
L^{(\ell)}_j c_\ell,\,
L^{(\ell)}_k c_\ell
\right\rangle.
\]
Thus,
\[
a^\top A^{(\ell)}a
=
\left\|
d\rho_\ell\!\left(\sum_{j=1}^{10}a_jE_j\right)c_\ell
\right\|_2^2.
\]
The combined metric is positive semidefinite, and its nullspace estimates the
continuous symmetry algebra.

\paragraph{Radial component removal.}
Before estimating angular coefficients, we remove the radial component
independently for each sampled function.  Let
\[
q_i=\|x_i\|_2^2.
\]
We regress the sampled values onto the degree-six shifted-Legendre basis in
\(q\),
\[
\widehat g_f(q)=\sum_{k=0}^{6}\beta_{f,k}\phi_k(q),
\]
using a differentiable least-squares solve, and define
\[
r_i=f(x_i)-\widehat g_f(q_i).
\]
The seven fitted radial coefficients are retained as invariant context
features.  This detrending is analytic and is unchanged between the structured
calibration and amortized-coefficient models.

\subsubsection{Structured harmonic calibration}

The structured model estimates each harmonic block independently from the
samples and learns only how the finite-sample blocks should be combined.
For each \(\ell\in\Lambda\), the finite-sample coefficient estimate is
\[
\widetilde c_\ell
=
\frac{1}{N}\sum_{i=1}^{N}
r_iY_\ell(\widehat x_i),
\qquad
\widehat x_i=\frac{x_i}{\|x_i\|_2}.
\]
Applying the fixed infinitesimal representation matrices gives
\[
\widetilde A^{(\ell)}
=
J_\ell(\widetilde c_\ell)^\top
J_\ell(\widetilde c_\ell).
\]

\paragraph{Invariant gate features.}
The learned calibrator receives only rotation-invariant scalar summaries.
For each degree, these include:
\begin{itemize}
    \item logarithmic coefficient power;
    \item estimated signal and noise traces;
    \item trace-based signal-to-noise ratio;
    \item signal and noise Frobenius norms;
    \item the ten ordered, trace-normalized signal eigenvalues;
    \item the ten ordered, noise-normalized eigenvalues.
\end{itemize}
The signal and noise quantities are obtained from split estimates of the
harmonic block.  Across the four degrees, these features contribute
\(4\times 26=104\) scalars.  Six normalized pairwise block contractions and
\(\log N\) are appended, giving \(111\) invariant inputs in total.

A width-\(64\) GELU multilayer perceptron predicts four nonnegative softmax
weights
\[
w_\ell(f)\geq 0,
\qquad
\sum_{\ell\in\Lambda}w_\ell(f)=1,
\]
and six symmetry-dimension logits corresponding to
\[
\mathcal R=\{0,1,2,3,4,10\}.
\]
The calibrated stabilizer metric is
\[
K_\theta(f)
=
\sum_{\ell\in\Lambda}
w_\ell(f)
\frac{\widetilde A^{(\ell)}}
     {\operatorname{tr}\widetilde A^{(\ell)}+\varepsilon}.
\]
The model contains \(11{,}978\) trainable parameters.

Because the learned quantities \(w_\ell\) are invariant scalars,
\[
K_\theta(R\cdot f)
=
\operatorname{Ad}_R
K_\theta(f)
\operatorname{Ad}_R^\top.
\]
For a candidate dimension \(d\in\mathcal R\), let
\(P_d(K_\theta)\) denote the projector onto the eigenspace spanned by the
\(d\) smallest eigenvectors of \(K_\theta\).  The hard prediction uses
\[
\widehat d=\arg\max_{d\in\mathcal R}p_\theta(d\mid f),
\qquad
\widehat P=P_{\widehat d}(K_\theta).
\]
We also form the differentiable soft projector
\[
\widehat P_{\mathrm{soft}}
=
\sum_{d\in\mathcal R}
p_\theta(d\mid f)\,P_d(K_\theta).
\]

\paragraph{Calibration loss and optimization.}
The structured model is trained with
\[
\mathcal L_{\mathrm{cal}}
=
\mathcal L_{\mathrm{CE}}
+
\frac{1}{10}
\left\|
\widehat P_{\mathrm{soft}}-P^\star
\right\|_F^2
+
\frac{1}{10}
\left\|
P_{d^\star}(K_\theta)-P^\star
\right\|_F^2,
\]
where \(d^\star\) and \(P^\star\) are the true symmetry dimension and
projector.  The final term evaluates the learned metric geometry while
supplying the oracle dimension during training.  Division by
\(\dim\mathfrak{so}(5)=10\) keeps the projector losses on an order-one scale.

Training uses AdamW with learning rate \(10^{-3}\), weight decay \(10^{-4}\),
batch size \(256\), and at most \(150\) epochs.  Early stopping uses patience
\(25\).  We train three independent optimization seeds,
\[
1729,\qquad 271828,\qquad 314159.
\]
The training and validation sets contain respectively \(128\) and \(64\)
functions per subclass, balanced over the nine benchmark subclasses.  This
gives \(1{,}152\) training and \(576\) validation functions.  Each function is
evaluated using nested prefixes
\[
N\in\{32,64,128,256,512,1024,2048\}.
\]
Checkpoint selection uses validation symmetry-dimension accuracy, with mean
hard-projector error as the tie-breaker.  No final-test function is used for
model or hyperparameter selection.

\subsubsection{Amortized equivariant coefficient operator}

The amortized model replaces the independent coefficient fit with a shared,
function-level equivariant estimator.  It preserves the exact harmonic
carriers and analytic stabilizer construction; learning is restricted to
invariant scalar quadrature weights and symmetry-dimension prediction.

\paragraph{Per-observation invariant features.}
For observation \(i\), the base invariant feature vector is
\[
s_i=
\left[
r_i^{(x)},\,
q_i,\,
q_i^2,\,
q_i^3,\,
\overline f_i,\,
\overline f_i^2,\,
\overline r_i,\,
\overline r_i^2,\,
q_i\overline r_i,\,
\log \sigma_r
\right],
\]
where
\[
r_i^{(x)}=\|x_i\|_2,\qquad
q_i=\|x_i\|_2^2,
\]
\(\overline f_i\) is the centered and standardized function value,
\(\overline r_i\) is the normalized radial-LS residual, and
\(\sigma_r\) is the per-function residual scale.

The frozen V2 architecture appends seven invariant all-pairs statistics:
\begin{enumerate}
    \item the mean first-order contraction \(x_i^\top x_j\);
    \item the mean squared contraction \((x_i^\top x_j)^2\);
    \item the mean cubic contraction \((x_i^\top x_j)^3\);
    \item first-order contractions weighted by standardized values;
    \item second-order contractions weighted by standardized values;
    \item first-order contractions weighted by radial residuals;
    \item second-order contractions weighted by radial residuals.
\end{enumerate}
These quantities are evaluated through pooled first-, second-, and third-order
Cartesian moments rather than by materializing an \(N\times N\) matrix.
Consequently, the implementation remains permutation invariant and avoids
quadratic memory in \(N\).  The resulting local feature dimension is \(17\).

\paragraph{Encoder and invariant context.}
A shared three-layer SiLU MLP with hidden width \(96\) maps each
\(s_i\in\mathbb R^{17}\) to a \(96\)-dimensional local embedding \(h_i\).
We compute both mean and maximum pooling,
\[
\overline h=\frac1N\sum_i h_i,
\qquad
h_{\max}=\max_i h_i.
\]
The pooled features are concatenated with the seven radial-LS coefficients
and the logarithm of the function-value scale.  A second width-\(96\) SiLU
MLP produces a \(96\)-dimensional invariant global context \(z_f\).

\paragraph{Equivariant coefficient prediction.}
For every \(\ell\in\{2,4,5,6\}\), a degree-specific width-\(96\) MLP receives
\[
[s_i,h_i,z_f]
\]
and predicts a positive scalar gate
\[
\alpha_{\ell i}
=
\operatorname{softplus}
\left(
g_{\ell,\theta}(s_i,h_i,z_f)
\right)
+10^{-5}.
\]
The predicted harmonic block is
\[
\widehat c_\ell
=
\frac1N\sum_{i=1}^{N}
r_i\alpha_{\ell i}Y_\ell(\widehat x_i).
\]
All learned quantities multiplying \(Y_\ell\) are rotation-invariant scalars.
Therefore,
\[
\widehat c_\ell(R\mathcal D_f)
=
D_\ell(R)\widehat c_\ell(\mathcal D_f)
\]
exactly, up to floating-point error.  The predicted stabilizer is constructed
analytically:
\[
\widehat K_f
=
\sum_{\ell\in\Lambda}
J_\ell(\widehat c_\ell)^\top
J_\ell(\widehat c_\ell).
\]
A separate invariant MLP maps \(z_f\) to six logits over the allowed symmetry
dimensions.  The complete V2 operator contains \(195{,}754\) trainable
parameters.

\paragraph{Coefficient targets.}
Training targets are population harmonic coefficients obtained from exact
angular projection of the frozen anisotropic polynomial family together with
analytic unit-ball radial integrals.  The targets therefore do not contain
finite-sample projection noise.  They are used only during training and
validation and are never supplied at inference time.

For each degree, let
\[
e_\ell
=
\|\widehat c_\ell-c_\ell^\star\|_2^2,
\qquad
p_\ell
=
\|c_\ell^\star\|_2^2.
\]
The active-target coefficient loss is normalized by target power:
\[
\mathcal L_{\ell,\mathrm{active}}
=
\frac{e_\ell}
{p_\ell+\epsilon\,\overline p_{\ell,\mathrm{active}}},
\qquad
\epsilon=0.05.
\]
For classes in which a block vanishes exactly, leakage is penalized relative
to the mean active-block scale:
\[
\mathcal L_{\ell,\mathrm{zero}}
=
\frac{\|\widehat c_\ell\|_2^2}
{\overline p_{\ell,\mathrm{active}}}.
\]
The total coefficient loss is
\[
\mathcal L_{\mathrm{coeff}}
=
\sum_{\ell\in\Lambda}
\left(
\mathcal L_{\ell,\mathrm{active}}
+
0.1\,\mathcal L_{\ell,\mathrm{zero}}
\right).
\]

\paragraph{Two-stage optimization.}
Training proceeds in two stages:
\begin{align}
\text{Stage A:}\qquad
&\mathcal L_A=\mathcal L_{\mathrm{coeff}},\\
\text{Stage B:}\qquad
&\mathcal L_B=
\mathcal L_{\mathrm{coeff}}
+
0.2\,\mathcal L_{\mathrm{CE}},
\end{align}
where \(\mathcal L_{\mathrm{CE}}\) is cross entropy over the six allowed
symmetry dimensions.  We deliberately do not add a direct projector loss.
The learned quantity of interest is the representation coefficient, while
the map from coefficients to the stabilizer metric remains analytic.
Avoiding a projector loss also removes the need to differentiate through
eigenvalue degeneracies.

The optimizer is AdamW with
\[
\text{learning rate}=3\times10^{-4},
\qquad
\text{weight decay}=10^{-5},
\qquad
\text{batch size}=16.
\]
Gradients are clipped to norm \(5\).  Stage A runs for at most \(30\) epochs
and Stage B for at most \(40\) epochs.  Validation-only early stopping starts
after epoch \(8\), with patience \(7\) and minimum improvement
\(10^{-4}\).  The frozen optimization seed is \(1729\).

A single model is trained across
\[
N\in\{32,64,128,256,512\}
\]
and evaluated without modification at
\[
N\in\{32,64,128,256,512,1024,2048\}.
\]
The scaled training and validation sets contain \(40{,}005\) and \(4{,}005\)
functions, respectively, balanced over the nine subclasses.  An independent
\(1{,}008\)-function development set is used for the frozen architecture
decision.  Train, validation, development, and final-test function seeds and
identifiers are disjoint.

\paragraph{Architecture freeze.}
An initial V1 model used only the ten local radial/value features.  Its
development audit recovered the degree-\(2\) block accurately but
underestimated degrees \(4\), \(5\), and \(6\).  The sole architecture
amendment was the addition of the seven invariant pairwise contractions
described above.  This V2 model was then frozen; no further architecture or
hyperparameter search was conducted.  In particular, the final-test data were
generated only after the V2 checkpoint and evaluation protocol had been
fixed.

\paragraph{Exactness tests.}
For both models, unit tests verify:
\begin{itemize}
    \item invariance to observation permutation;
    \item operation at variable sample count \(N\);
    \item covariance of every harmonic block,
    \[
    \widehat c_\ell(R\mathcal D_f)
    =
    D_\ell(R)\widehat c_\ell(\mathcal D_f);
    \]
    \item adjoint covariance of the stabilizer metric,
    \[
    \widehat K_{R\cdot f}
    =
    \operatorname{Ad}_R
    \widehat K_f
    \operatorname{Ad}_R^\top;
    \]
    \item covariance, symmetry, rank, and idempotence of the predicted
    projector;
    \item positive semidefiniteness of each
    \(J_\ell(\widehat c_\ell)^\top J_\ell(\widehat c_\ell)\);
    \item finite gradients through radial detrending, invariant pooling, and
    coefficient prediction.
\end{itemize}
For the trained V2 model, the measured relative equivariance errors for
degrees \(2,4,5,6\) were respectively
\[
3.38\times10^{-7},\quad
1.04\times10^{-6},\quad
2.17\times10^{-6},\quad
1.31\times10^{-6}.
\]
The corresponding errors for the stabilizer metric and projector were
\(3.84\times10^{-7}\) and \(5.10\times10^{-6}\).

\section{Additional results}
\subsection{Summary of Invariant Functions}

Here we provide complete details of the invariant functions used in the various experiments.

\subsubsection{$\mathrm{SO}(3)$ families}

\paragraph{Benchmark A.}
Let $v\in S^2$, $s=v^\top x$, and
$r_\perp^2=\|x\|^2-s^2$.  The initial axisymmetric family is
\begin{equation}
 f(x)=c_0+c_1s+c_2s^2+c_3r_\perp^2+c_4sr_\perp^2+c_5s^3+c_6r_\perp^4.
 \label{eq:benchmark_a_family}
\end{equation}
The coefficients are sampled from centered Gaussians of standard deviation
$0.5$, except that $|c_1|\sim\operatorname{Unif}(0.5,1.5)$ with random sign.
The nonzero linear term removes accidental enhancement beyond rotations about
$v$.  The symmetry projector in $\mathfrak{so}(3)\simeq\mathbb R^3$ is
$vv^\top$.

\paragraph{Higher-order axial family.}
The no-low-order-shortcut family uses $q=\|x\|^2$, $s=v^\top x$, and
\begin{align}
 g(q)&=c_0+c_1q+c_2q^2+c_3\sin(\omega q),\\
 h(s,q)&=a_4s^4+a_6s^6+a_{4q}qs^4,\\
 f(x)&=g(q)+e^{-\beta q}h(s,q).
 \label{eq:so3_higher_order_family}
\end{align}
Here $c_0,c_1\sim\operatorname{Unif}(-1,1)$,
$c_2\sim\operatorname{Unif}(-.5,.5)$,
$|c_3|\sim\operatorname{Unif}(.2,.8)$ with random sign,
$|a_4|\sim\operatorname{Unif}(.5,1.5)$ with random sign,
$a_6,a_{4q}\sim\operatorname{Unif}(-.8,.8)$,
$\beta\sim\operatorname{Unif}(.3,1.2)$, and
$\omega\sim\operatorname{Unif}(.75,2.5)$.  The intermediate benchmark keeps
the same draws and multiplies $(a_4,a_6,a_{4q})$ by three.  Because the
anisotropy begins at degree four,
\[
 \nabla f(0)=0,\qquad
 \nabla^2f(0)=2(c_1+c_3\omega)I_3.
\]

\paragraph{Mixed full/axial/trivial family.}
For $\ell\in\{2,4,6\}$ define the solid zonal harmonics
\begin{align}
 Z_2(v;x)&=\tfrac12(3s^2-q),\\
 Z_4(v;x)&=\tfrac18(35s^4-30qs^2+3q^2),\\
 Z_6(v;x)&=\tfrac1{16}(231s^6-315qs^4+105q^2s^2-5q^3),
\end{align}
where $s=v^\top x$.  The sampled function is
\begin{equation}
 f(x)=g(q)+\sum_{\ell\in\{2,4,6\}}
 a_\ell e^{-\beta_\ell q}(1+\gamma_\ell q)Z_\ell(v_\ell;x).
 \label{eq:so3_mixed_family}
\end{equation}
For the full-symmetry class all $a_\ell$ are inactive.  For the axial class,
$v_2=v_4=v_6$, giving $d_f=1$.  For the trivial class,
$v_2=v_4=v$ but $v_6=w$, where
$\angle(v,w)\sim\operatorname{Unif}(30^\circ,60^\circ)$; the incompatible
axes have zero continuous stabilizer.  In one half of axial and trivial
examples, $a_2=0$ exactly.  Before this masking,
$|a_2|\sim\operatorname{Unif}(.5,1.2)$ and
$|a_4|,|a_6|\sim\operatorname{Unif}(.8,1.4)$ with independent random signs;
$\beta_\ell\sim\operatorname{Unif}(.3,1.2)$ and
$\gamma_\ell\sim\operatorname{Unif}(-.5,.5)$.  Radial nuisance draws are
matched exactly across the three classes.  All axes are globally Haar rotated,
and observations are uniform in volume in the unit ball.

\subsubsection{$\mathrm{SO}(4)$ families}

\paragraph{Quadratic oracle family.}
Experiment 2A uses
\begin{equation}
 f(x)=x^\top R\operatorname{diag}(\lambda^{(d)})R^\top x,
 \qquad R\sim\operatorname{Haar}(\mathrm{SO}(4)),
\end{equation}
with
\begin{align*}
\lambda^{(0)}&=(-1.50,-.35,.80,1.90),&
\lambda^{(1)}&=(-1.10,-1.10,.45,1.75),\\
\lambda^{(2)}&=(-1.20,-1.20,1.25,1.25),&
\lambda^{(3)}&=(-.85,-.85,-.85,1.40),\\
\lambda^{(6)}&=(.70,.70,.70,.70).
\end{align*}
The multiplicities give stabilizer dimensions $d\in\{0,1,2,3,6\}$.

\paragraph{Higher-order family.}
Let $z=R^\top x$ (equivalently $z=xR$ in the stored row-vector convention),
$q=\|x\|^2$, and
\begin{equation}
 f_d(x)=c_0+c_1q+c_2q^2+c_3\sin(\omega q)+e^{-\beta q}h_d(z).
 \label{eq:so4_common_family}
\end{equation}
The canonical anisotropies are
\begin{align}
h_6&=0,\\
h_3&=au^2+bu^3+cuw+dw^2,
&&u=z_1^2+z_2^2+z_3^2,\quad w=z_4^2,\\
h_2&=au^2+bw^2+cuw+du^3+ew^3,
&&u=z_1^2+z_2^2,\quad w=z_3^2+z_4^2,\\
h_1&=au^2+bv^2+cw^2+duv+euw+hvw,
&&u=z_1^2+z_2^2,\quad v=z_3^2,\quad w=z_4^2,\\
h_0&=\sum_{i=1}^4(a_i z_i^4+b_i z_i^6)
     +\sum_{1\le i<j\le4}c_{ij}z_i^2z_j^2.
\end{align}
The ranges are
\begin{align*}
d=3:&\quad a\in[.8,1.6],~b\in[-1.2,-.4],~c\in[-.6,.6],~d\in[.5,1.3],\\
d=2:&\quad a\in[.8,1.6],~b\in[-1.6,-.8],~c\in[-.6,.6],~d\in[.3,.9],~e\in[-.9,-.3],\\
d=1:&\quad a\in[.9,1.7],~b\in[-1.7,-.9],~c\in[.4,1],~d\in[.3,.9],~e\in[-.9,-.3],~h\in[-.5,.5].
\end{align*}
For $d=0$, $(a_i)$ and $(b_i)$ are the bases
$(1.40,.75,-.85,-1.45)$ and $(.45,-.65,.85,-1.05)$ multiplied by independent
scales in $[.85,1.15]$, and $c_{ij}\sim\operatorname{Unif}(-.45,.45)$.
The shared radial ranges are
$c_0\in[-1,1]$, $c_1\in[-.6,.6]$, $c_2\in[-.4,.4]$,
$|c_3|\in[.2,.6]$ with random sign, $\omega\in[.8,2.4]$, and
$\beta\in[.2,.8]$.  Again
$\nabla f_d(0)=0$ and
$\nabla^2 f_d(0)=2(c_1+c_3\omega)I_4$.

\subsubsection{$\mathrm{SO}(5)$ higher-order family}

The common form is identical to Eq.~\eqref{eq:so4_common_family} with
$x,z\in\mathbb R^5$.  For a partition
$\mathcal G=(G_1,\ldots,G_k)$ of the coordinate indices, define
$u_g=\sum_{j\in G_g}z_j^2$ and
\begin{equation}
 h_{\mathcal G}(z)=\sum_{g=1}^k(a_gu_g^2+b_gu_g^3)
 +\sum_{g<h}c_{gh}u_gu_h.
 \label{eq:so5_group_anisotropy}
\end{equation}
The canonical partitions are
\begin{align*}
d=0:&~(1)(2)(3)(4)(5),&
d=1:&~(12)(3)(4)(5),\\
d=2:&~(12)(34)(5),&
d=3:&~(123)(4)(5),\\
d=4:&~(123)(45),&
d=10:&~(12345),~~h\equiv0.
\end{align*}
Here $(a_g)$ and $(b_g)$ use the ordered bases
$(1.55,.75,-.85,-1.45,2.05)$ and
$(.45,-.65,.85,-1.05,1.25)$, each multiplied by an independent scale in
$[.82,1.18]$, while $c_{gh}\in[-.45,.45]$.  The radial ranges are
$c_0\in[-1,1]$, $c_1\in[-.6,.6]$, $c_2\in[-.4,.4]$,
$|c_3|\in[.2,.6]$, $\omega\in[.8,2.4]$, and $\beta\in[.2,.8]$.

The three coupled one-dimensional classes use the base partition
$(12)(34)(5)$ and add
\begin{equation}
 h_{p:q}^{\mathrm{cpl}}(z)
 =\operatorname{Re}\!\left[
 \kappa(z_1+iz_2)^q(z_3-iz_4)^p
 \right]m_{p:q}(z),
 \qquad \kappa=Ae^{i\varphi},
 \label{eq:so5_coupled_term}
\end{equation}
where
\begin{align*}
m_{1:1}(z)&=z_1^2+z_2^2+\gamma(z_3^2+z_4^2)+\delta z_5^2,\\
m_{1:2}(z)&=z_5,\\
m_{2:3}(z)&=1.
\end{align*}
We draw $A\in[1.5,3]$, $\varphi\in[-\pi,\pi]$,
$\gamma\in[.6,1.4]$, and $\delta\in[-.8,.8]$.  The preserved generator is
$pE_{12}+qE_{34}$, so these terms distinguish the relative rates rather than
only the invariant two-plane.  Finally, every canonical function is conjugated
by an independent Haar draw $R\in\mathrm{SO}(5)$.

\subsubsection{$\mathrm{SL}(2,\mathbb R)$ polynomial families}

Write $x=(x_1,x_2)$ and use the canonical generators
$A_{\rm ell}=E-F$, $A_{\rm hyp}=H$, and $A_{\rm par}=E$.  Their polynomial
first integrals are respectively
\[
 I_{\rm ell}=x_1^2+x_2^2,\qquad
 I_{\rm hyp}=x_1x_2,\qquad
 I_{\rm par}=x_2.
\]
The canonical functions are
\begin{align}
f_{\rm ell}(x)&=c_0+a_4 I_{\rm ell}^{2}+a_6I_{\rm ell}^{3}+a_8I_{\rm ell}^{4},\\
f_{\rm hyp}(x)&=c_0+a_4 I_{\rm hyp}^{2}+a_6I_{\rm hyp}^{3}+a_8I_{\rm hyp}^{4},\\
f_{\rm par}(x)&=c_0+a_4x_2^4+a_5x_2^5+a_6x_2^6,\\
f_{0}(x)&=c_0+a_4I_{\rm ell}^2+a_5x_2^5+a_6I_{\rm hyp}^3.
\end{align}
The final line intersects three incompatible degree-wise stabilizers and has
no continuous symmetry.  The magnitudes satisfy
$|a_4|\in[.7,1.3]$, $|a_5|\in[.35,.85]$,
$|a_6|\in[.25,.70]$, $|a_8|\in[.12,.40]$, with independent random signs,
and $c_0\in[-.5,.5]$.  Each function is conjugated by a bounded
$g\in\mathrm{SL}(2,\mathbb R)$ generated using
$\theta\in[-\pi,\pi]$, log-scale in $[-.35,.35]$, and shear in $[-.5,.5]$.
This is deliberately not described as Haar sampling because
$\mathrm{SL}(2,\mathbb R)$ is noncompact.  Numerical integration uses a fixed
bounded sampling law on $[-.8,.8]^2$ and does not define the function's
symmetry.

\subsubsection{Local involutive-distribution benchmark}

Let $z=(z_1,\ldots,z_6)$ and $u=(z_3,\ldots,z_6)$.  In latent coordinates,
\begin{equation}
 F_0(z)=H(u),\qquad
 \ker J_{F_0}(z)=\operatorname{span}\{e_1,e_2\}.
\end{equation}
An instance-specific diffeomorphism is a composition of eight additive
coupling layers.  For a layer-dependent partition $(A,B)$,
\begin{align}
x_A&=z_A,\\
x_B&=z_B+a\odot\left[
\tanh(W_tz_A+b)+.55\sin(W_sz_A+\phi)
+.35\tanh(W_az_A)\odot\tanh(W_bz_A)
\right].
\label{eq:local_coupling_layer}
\end{align}
Every layer is inverted exactly by subtracting the bracketed shift.  The
amplitudes are sampled in $[.18,.32]$, the matrices are row-normalized random
matrices, and biases/phases are sampled from $[-.7,.7]$ and $[-\pi,\pi]$.

The transverse map is

\begin{align}
H(u)&=A_{\rm out} \left[A_{\rm core}\psi(A_{\rm in}u)+b_0\right],\\
\psi _j(t) &= t+\alpha_j\tanh( \omega_j t + b_j)
             +\eta _j\sin(\nu _j t+\varphi _j),
\end{align}
where all three $4\times4$ matrices have condition number at most two,
$\alpha_j\in[.08,.20]$, $\eta_j\in[-.08,.08]$,
$\omega_j,\nu_j\in[.7,1.4]$, $b_j\in[-.8,.8]$, and
$\varphi_j\in[-\pi,\pi]$.  The observed function and target distribution are
\begin{equation}
 F(x)=H\!\left(\pi_\perp\phi^{-1}(x)\right),
 \qquad
 \mathcal D_F(x)=D\phi_z\operatorname{span}\{e_1,e_2\},
 \quad z=\phi^{-1}(x).
 \label{eq:hard_local_family}
\end{equation}
Instances are rejected unless the diffeomorphism condition is at most 20, the
minimum transverse singular value is at least $.25$, the maximum coordinate
magnitude is at most 3, and the observable-Jacobian norm is at most 20.
Consequently the benchmark is difficult because of sparse local sampling and
spatial variation, not singular geometry.

\subsubsection{Physical orientation and superspin energies}

\paragraph{Quaternion orientation ($\mathrm{SO}(4)$).}
We sample $q\in S^3/\{q\sim-q\}$ and let $R(q)\in\mathrm{SO}(3)$ be the
standard homogeneous quadratic quaternion-to-rotation map.  After an
independent Haar orientation change in $\mathrm{SO}(4)$, the canonical energy
families are
\begin{align}
d=6:&\quad E(q)=c,\\
d\in\{2,3\}:&\quad E(q)=c+p_1t+p_2t^2+p_3t^3,
\quad t=R(q)_{33}~(d=2),~~t=\tfrac13\operatorname{tr}R(q)~(d=3),\\
d=1:&\quad E(q)=c+\sum_{j=1}^3p_jR(q)_{3j}^2,\\
d=0:&\quad E(q)=c+\sum_{j=1}^3\left[p_jR(q)_{jj}+.1p_{j+3}R(q)_{jj}^2\right].
\end{align}
For $d=2,3$, $p_1\in[1,1.5]$, $p_2\in[-.15,.15]$, and
$p_3\in[.05,.15]$.  For $d=1$,
$(p_1,p_2,p_3)=s(-1,u,1)$ with $s\in[.8,1.4]$ and
$u\in[-.2,.2]$.  For $d=0$,
$(p_1,p_2,p_3)=s(.35,.9,1.7)$ with $s\in[.8,1.2]$, and
$p_4,p_5,p_6\in[.5,1.5]$.

\paragraph{Five-dimensional superspin ($\mathrm{SO}(5)$).}
For $n\in S^4$,
\begin{equation}
 E(n)=c+t+b t^2,\qquad t=n^\top Bn,
 \label{eq:superspin_energy}
\end{equation}
where $B=R\operatorname{diag}(\lambda)R^\top$ and
$R\sim\operatorname{Haar}(\mathrm{SO}(5))$.  The eigenvalue multiplicities
are
\[
(1,1,1,1,1),~(2,1,1,1),~(2,2,1),~(3,1,1),~(3,2),~(5)
\]
for $d=0,1,2,3,4,10$, respectively.  Distinct eigenvalue centers are evenly
spaced on $[-.9,.9]$ and independently perturbed by at most $.05$;
$b\in[-.15,.15]$ and $c\in[-1,1]$.  Thus the stabilizer is determined exactly
by the eigenspace multiplicities of $B$, while the random orientation removes
any preferred coordinate system.

\label{app:full_results}
\subsection{$\mathrm{SO}(5)$ final-test results}
\label{app:so5_final_test}

We evaluate all methods on the same untouched final-test functions and on
nested observation prefixes
\(
N\in\{32,64,128,256,512,1024,2048\}
\).
The target is the orthogonal projector
\(P_{\mathfrak h_f}\in\mathbb R^{10\times 10}\) onto the symmetry Lie algebra
\(\mathfrak h_f\subset\mathfrak{so}(5)\), whose allowed dimensions are
\(d_f\in\{0,1,2,3,4,10\}\).
AEHE denotes the amortized equivariant harmonic estimator described in
Appendix~\ref{app:so5_details}.  AHC is the calibration-only harmonic model:
it estimates the harmonic coefficients independently for each function and
learns only invariant finite-sample calibration of the stabilizer blocks.
The analytic estimator uses the frozen analytic harmonic construction and its
analytic dimension rule.  Beyond-Affine is restricted to Killing fields in
\(\mathfrak{so}(5)\), so that its output is comparable to the same target
subspace.

\begin{table*}[t]
\centering
\caption{Untouched $\mathrm{SO}(5)$ final-test results.  ``Dim.'' is symmetry-
dimension accuracy, ``hard'' is
$\|\widehat P-P_{\mathfrak h_f}\|_F^2$, and ``oracle'' supplies the true
dimension only when converting the estimated stabilizer metric into a
projector.  Lower projector error is better.}
\label{tab:so5_full_final}
\small
\setlength{\tabcolsep}{4.0pt}
\begin{tabular}{c|ccc|cc|cc|ccc}
\toprule
& \multicolumn{3}{c|}{AEHE}
& \multicolumn{2}{c|}{AHC}
& \multicolumn{2}{c|}{Analytic harmonic}
& \multicolumn{3}{c}{Beyond-Affine} \\
$N$ & Dim. & Hard & Oracle & Dim. & Hard & Dim. & Hard
    & Dim. & Hard & Oracle \\
\midrule
32   & .668 & 1.265 & 1.107 & .514 & 1.737 & .111 & 7.440 & .130 & 2.497 & 1.530 \\
64   & .834 &  .794 &  .705 & .566 & 1.359 & .123 & 7.146 & .123 & 2.377 & 1.119 \\
128  & .931 &  .531 &  .500 & .646 & 1.022 & .201 & 4.299 & .214 & 1.890 &  .300 \\
256  & .973 &  .306 &  .295 & .746 &  .730 & .226 & 2.210 & .866 &  .159 &  .109 \\
512  & .989 &  .126 &  .117 & .807 &  .530 & .253 & 1.764 & .879 &  .131 &  .082 \\
1024 & .996 &  .030 &  .026 & .898 &  .298 & .334 & 1.296 & .891 &  .131 &  .082 \\
2048 & .998 &  .007 &  .005 & .943 &  .155 & .502 &  .827 & .891 &  .132 &  .074 \\
\bottomrule
\end{tabular}
\end{table*}

AEHE improves both parts of the inverse problem.  Relative to AHC, it shares
information across functions when estimating representation coefficients;
relative to the analytic and Beyond-Affine estimators, it also learns the
finite-sample dimension decision.  Beyond-Affine becomes geometrically strong
once its degree-four per-function polynomial fit is sufficiently determined,
as reflected by its oracle-dimension error, but its learned-free dimension rule
and hard-projector error plateau at large~$N$.  In contrast, AEHE continues to
improve through $N=2048$.  This separation supports the interpretation that
finite-sample representation estimation, rather than the analytic stabilizer
map itself, is the principal high-dimensional bottleneck.

For completeness, at $N=2048$ the three coupled one-dimensional subclasses
all attain unit symmetry-dimension accuracy.  For rate classes $1{:}1$,
$1{:}2$, and $2{:}3$, respectively, AEHE obtains projector errors
$0.00581$, $0.00449$, and $0.00238$; generator principal angles
$2.46^\circ$, $2.40^\circ$, and $1.87^\circ$; and recovered robust rate ratios
$1.018$, $2.114$, and $1.537$ (targets $1$, $2$, and $1.5$).  The angle and
projector errors are the primary diagnostics because the coordinate ratio is
less stable when a generator component is small.  The official LieGAN
comparison and the complete coupled-generator discussion remain in
Appendix~\ref{app:liegan_comparison}.

All model and threshold choices were frozen using training/development data.
Predictions for every final-test input were written and hashed before the
private final-test oracle was opened.  The final oracle was used only for the
reported scoring pass.
\subsection{$\mathrm{SO}(3)$ semi-supervised symmetry discovery}
\label{app:so3_semisupervised}

The mixed-symmetry $\mathrm{SO}(3)$ benchmark contains full, axial, and trivial
continuous-symmetry classes, including a designated hard subset whose
degree-two harmonic signal is exactly zero.  The physics target is computed
from $C=32$ oracle-gradient collocation points.  Physics-only training uses a
soft spectral projector obtained from the normalized physics Gram matrix; it
does not use the symmetry class, symmetry dimension, or target projector.
Semi-supervised training uses the exact frozen supervised objective on the
labeled subset and the spectral physics target on the remaining functions.
Nested class-balanced labeled subsets are fixed with subset seed 1729.
The physics coefficient $\lambda_{\mathrm{phys}}=3$ is selected once at
$L=512$ on validation data and then frozen for all label budgets.  We report
mean $\pm$ sample standard deviation over training seeds 1729, 271828, and
314159.

\begin{table*}[t]
\centering
\caption{$\mathrm{SO}(3)$ untouched final-test results.  Mean accuracy averages
the five observation resolutions.  The remaining columns use $N=512$.
``Hard'' denotes the no-$\ell=2$ subset.}
\label{tab:so3_semisupervised_results}
\scriptsize
\setlength{\tabcolsep}{3.5pt}
\begin{tabular}{lrrrrr}
\toprule
Configuration & Mean dim. acc. (\%) & Dim. acc. (\%) & Hard acc. (\%)
& Hard projector MSE & Soft projector MSE\\
\midrule
Supervised $L=32$   & $70.251\pm.240$ & $73.525\pm.874$ & $80.934\pm.405$ & $.291\pm.011$ & $.282\pm.012$\\
Semi $L=32$         & $85.519\pm.165$ & $97.680\pm.072$ & $99.215\pm.060$ & $.024\pm.001$ & $.018\pm.001$\\
Supervised $L=128$  & $77.194\pm.753$ & $86.163\pm1.974$& $92.233\pm.755$ & $.139\pm.020$ & $.118\pm.025$\\
Semi $L=128$        & $85.044\pm.426$ & $97.261\pm.370$ & $99.255\pm.201$ & $.028\pm.003$ & $.020\pm.002$\\
Supervised $L=512$  & $82.445\pm.235$ & $95.021\pm.430$ & $97.907\pm.759$ & $.051\pm.004$ & $.036\pm.004$\\
Trace-only $L=512$  & $82.629\pm.261$ & $95.394\pm.133$ & $98.068\pm.302$ & $.047\pm.001$ & $.035\pm.001$\\
Semi $L=512$        & $85.899\pm.064$ & $97.660\pm.120$ & $99.245\pm.060$ & $.024\pm.001$ & $.017\pm.001$\\
Supervised $L=3333$ & $85.046\pm.282$ & $96.981\pm.223$ & $99.054\pm.166$ & $.031\pm.002$ & $.022\pm.001$\\
Semi $L=3333$       & $85.916\pm.096$ & $97.847\pm.114$ & $99.366\pm.060$ & $.023\pm.001$ & $.017\pm.001$\\
Supervised $L=9999$ & $86.092\pm.061$ & $97.800\pm.053$ & $99.366\pm.080$ & $.023\pm.000$ & $.017\pm.001$\\
Physics-only $L=0$  & $86.243\pm.032$ & $97.940\pm.035$ & $99.366\pm.080$ & $.022\pm.000$ & $.016\pm.000$\\
\bottomrule
\end{tabular}
\end{table*}

At every matched explicit-label budget, the semi-supervised model improves
over supervised-only training.  Physics-only training, without symmetry
labels, matches or slightly exceeds the fully supervised $L=9999$ arm.  The
hard-subset result shows that the gain is not attributable to a degree-two
shortcut: the physics Gram supplies the correct low-eigenvalue subspace even
when that harmonic block vanishes.

\subsection{$\mathrm{SO}(4)$ analytic recovery and learned-calibration ablation}
\label{app:so4_additional}

The $\mathrm{SO}(4)$ benchmark uses symmetry dimensions
\(d_f\in\{0,1,2,3,6\}\), a fixed hyperspherical extractor, and stabilizer
blocks at degrees $\ell\in\{2,4,6\}$.  The learned model receives 58 invariant
features and predicts nonnegative scalar gates for the three blocks together
with a symmetry-dimension head.  It has $8{,}456$ trainable parameters and is
trained on 2,000 functions, validated on 500 functions, and evaluated over
three frozen seeds.  The untouched test set contains 2,500 balanced functions
(seed 20261002).

\begin{table}[t]
\centering
\caption{$\mathrm{SO}(4)$ symmetry-dimension accuracy for the analytic
$\ell=2$ estimator and the full learned-calibration model.}
\label{tab:so4_dim_accuracy}
\small
\begin{tabular}{c|rrrrrr}
\toprule
$N$ & 32 & 64 & 128 & 256 & 512 & 1024\\
\midrule
Analytic $\ell=2$ (\%) & 23.12 & 36.25 & 53.75 & 74.38 & 90.62 & 93.12\\
Full learned (\%)      & 79.73 & 89.23 & 96.39 & 99.16 & 99.89 & 99.99\\
\bottomrule
\end{tabular}
\end{table}

The frozen A/B/C/D mechanism ablation separates dimension inference from
subspace geometry: A uses analytic dimension and analytic $\ell=2$ geometry;
B uses the learned dimension head with analytic geometry; C uses analytic
dimension with learned calibrated geometry; and D uses both learned
components.  At $N=512$, replacing only the analytic dimension rule (A$\to$B)
raises dimension accuracy from $91.48\%$ to $99.89\%$.  Independently, on the
rank-three class and with the oracle dimension supplied, replacing analytic
geometry by the learned calibrated metric reduces projector error from
$0.2794$ to $0.1181$ and the largest principal angle from $9.98^\circ$ to
$3.71^\circ$.  The full model therefore needs both components.  At $N=1024$
it obtains hard/oracle-dimension projector errors $0.0434/0.0432$ and mean
largest principal angle $1.52^\circ$.

The gates give the expected mechanism.  Within rank three, the weakest
degree-two-SNR quartile uses mean
\((w_2,w_4,w_6)=(0.2112,0.7218,0.0671)\), whereas the strongest quartile uses
\((0.5587,0.4396,0.0017)\).  Thus the learned metric recruits degree four when
degree two is unreliable.  Gate values for the fully symmetric rank-six class
are not scientifically interpreted because all population anisotropic blocks
vanish and trace-normalized finite-sample noise can make their relative weights
arbitrary.

\subsection{Physical applications: protocol and full sample/noise sweeps}
\label{app:physical_full_sweeps}

We use two simulated energy landscapes.  The quaternion experiment represents
orientations by $q\in S^3/\{q\sim-q\}$ and produces symmetry dimensions
\(\{0,1,2,3,6\}\) under Haar-random global $\mathrm{SO}(4)$ orientations.
The superspin experiment uses $n\in S^4$ and energies
\[
 E(n)=c+t+b t^2,\qquad t=n^\top Bn,
\]
where eigenvalue multiplicities of $B$ generate symmetry dimensions
\(\{0,1,2,3,4,10\}\); each instance is Haar-randomly oriented in
$\mathrm{SO}(5)$.  Each class uses 128/32/64 independent
train/validation/test functions.  We evaluate nested prefixes
\(N\in\{32,64,128,256,512,1024\}\) with additive output noise
\(\sigma\in\{0,0.01,0.05\}\) in fixed energy units and training seeds 1729,
271828, and 314159.

The amortized estimator (AEHE) uses the frozen coefficient-learning mechanism,
with sphere-mean subtraction replacing radial least squares.  The
calibration-only estimator (AHC) fits coefficients independently and learns
only invariant stabilizer calibration.  Analytic LS performs both coefficient
estimation and stabilizer recovery independently per function.  The
Killing-restricted Beyond-Affine baseline fits a polynomial surrogate per
function and restricts the discovered affine fields to the appropriate compact
Lie algebra.  All degrees, ridge parameters, and spectral thresholds are
selected on validation functions only.

\begin{table*}[t]
\centering
\caption{$\mathrm{SO}(4)$ quaternion-energy sweep.  Each method reports
symmetry-dimension accuracy / hard projector error.}
\label{tab:physical_so4_full}
\scriptsize
\setlength{\tabcolsep}{4pt}
\begin{tabular}{cc|rrrr}
\toprule
$\sigma$ & $N$ & AEHE & AHC & Analytic LS & Beyond-Affine\\
\midrule
0 & 32   & .875/.5806 & .942/.4119 & .263/2.248 & .672/.3638\\
0 & 64   & .934/.3709 & .991/.1378 & 1.000/$4.194\!\times\!10^{-5}$ & .797/.2036\\
0 & 128  & .953/.2711 & 1.000/.03202 & 1.000/$7.544\!\times\!10^{-6}$ & .994/.007439\\
0 & 256  & .956/.2193 & 1.000/.01308 & 1.000/$2.594\!\times\!10^{-6}$ & 1.000/.0001293\\
0 & 512  & .959/.1958 & 1.000/.004843& 1.000/$1.174\!\times\!10^{-6}$ & 1.000/$4.472\!\times\!10^{-5}$\\
0 & 1024 & .961/.1787 & 1.000/.002411& 1.000/$4.897\!\times\!10^{-7}$ & 1.000/$2.087\!\times\!10^{-5}$\\
\midrule
.01 & 32   & .876/.5805 & .938/.4231 & .250/2.291 & .459/1.580\\
.01 & 64   & .936/.3697 & .989/.1438 & .800/1.200 & .597/1.404\\
.01 & 128  & .956/.2688 & 1.000/.03201& .800/1.200 & .794/1.207\\
.01 & 256  & .957/.2179 & 1.000/.01311& .800/1.200 & .800/1.200\\
.01 & 512  & .959/.1942 & 1.000/.004836& .800/1.200 & .800/1.200\\
.01 & 1024 & .960/.1796 & 1.000/.002413& .800/1.200 & .800/1.200\\
\midrule
.05 & 32   & .869/.5980 & .948/.4171 & .200/2.400 & .287/1.886\\
.05 & 64   & .930/.3922 & .986/.1480 & .697/1.326 & .469/1.597\\
.05 & 128  & .953/.2809 & 1.000/.03226& .797/1.203 & .775/1.227\\
.05 & 256  & .949/.2411 & 1.000/.01336& .800/1.200 & .800/1.200\\
.05 & 512  & .957/.2145 & 1.000/.004856& .800/1.200 & .800/1.200\\
.05 & 1024 & .954/.2042 & 1.000/.002442& .800/1.200 & .800/1.200\\
\bottomrule
\end{tabular}
\end{table*}

\begin{table*}[t]
\centering
\caption{$\mathrm{SO}(5)$ superspin-energy sweep.  Each method reports
symmetry-dimension accuracy / hard projector error.}
\label{tab:physical_so5_full}
\scriptsize
\setlength{\tabcolsep}{4pt}
\begin{tabular}{cc|rrrr}
\toprule
$\sigma$ & $N$ & AEHE & AHC & Analytic LS & Beyond-Affine\\
\midrule
0 & 32   & .614/1.382 & .537/1.403 & .167/3.333 & .586/.6798\\
0 & 64   & .701/.8294 & .714/.7667 & .172/3.285 & .956/.05366\\
0 & 128  & .791/.4232 & .875/.3199 & 1.000/$9.181\!\times\!10^{-16}$ & .997/.002937\\
0 & 256  & .839/.2439 & .951/.1023 & 1.000/$2.237\!\times\!10^{-16}$ & 1.000/$3.226\!\times\!10^{-5}$\\
0 & 512  & .858/.1968 & .991/.03344& 1.000/$9.118\!\times\!10^{-17}$ & 1.000/$8.794\!\times\!10^{-6}$\\
0 & 1024 & .892/.1348 & .998/.01261& 1.000/$4.037\!\times\!10^{-17}$ & 1.000/$3.710\!\times\!10^{-6}$\\
\midrule
.01 & 32   & .603/1.401 & .544/1.409 & .167/3.333 & .417/2.326\\
.01 & 64   & .712/.8071 & .727/.7769 & .167/3.313 & .789/1.722\\
.01 & 128  & .787/.4297 & .872/.3225 & .833/1.667 & .831/1.670\\
.01 & 256  & .846/.2384 & .952/.09882& .833/1.667 & .833/1.667\\
.01 & 512  & .857/.1993 & .989/.03359& .833/1.667 & .833/1.667\\
.01 & 1024 & .891/.1365 & .997/.01350& .833/1.667 & .833/1.667\\
\midrule
.05 & 32   & .586/1.410 & .546/1.452 & .167/3.333 & .318/2.563\\
.05 & 64   & .663/.8744 & .707/.8004 & .167/3.333 & .656/1.911\\
.05 & 128  & .749/.5008 & .857/.3175 & .643/1.900 & .820/1.694\\
.05 & 256  & .801/.3207 & .951/.1096 & .833/1.668 & .833/1.670\\
.05 & 512  & .816/.2638 & .987/.04018& .833/1.667 & .833/1.668\\
.05 & 1024 & .812/.2579 & .999/.01204& .833/1.667 & .833/1.667\\
\bottomrule
\end{tabular}
\end{table*}

In the clean regime, independently fitted harmonic coefficients can be nearly
exact once $N$ is large enough.  Under noise, however, the independent
coefficient and polynomial fits often collapse to an incorrect stable
dimension decision, producing the characteristic plateaus in Tables
\ref{tab:physical_so4_full}--\ref{tab:physical_so5_full}.  AHC is the strongest
noise-robust method in these two controlled physical simulations: it combines
analytic coefficient estimation with learned cross-function spectral
calibration.  AEHE is more robust than raw analytic LS in many sparse/noisy
regimes, but its coefficient amortization introduces bias and does not dominate
AHC here.  Consequently, these applications support amortization as a
noise-stabilization mechanism, not a universal replacement for accurate
per-function least squares.

\subsection{Metrics, model selection, and reproducibility protocol}
\label{app:evaluation_protocol}

For an estimated symmetry dimension $\widehat d_f$ and hard projector
\(\widehat P_f\), we report
\[
 \operatorname{Acc}_{\rm dim}
 =\frac{1}{|
 \mathcal T|}\sum_{f\in\mathcal T}
 \mathbf 1\{\widehat d_f=d_f\},
 \qquad
 L_{\rm proj}=\|\widehat P_f-P_{\mathfrak h_f}\|_F^2.
\]
When a method produces a soft projector, its squared Frobenius error is reported
separately.  Oracle-dimension error uses the method's estimated metric or
geometry but supplies $d_f$ only at the final eigenspace-extraction step; it
therefore diagnoses geometry independently of dimension classification.  For
equal-dimensional subspaces with orthonormal bases $U$ and $V$, principal
angles are obtained from the singular values of $U^\top V$; we report the
largest angle, and the mean angle where used consistently by the corresponding
experiment.

All benchmark functions are separated at the function-instance level across
training, validation, and test splits.  Nested observation prefixes reuse the
same maximum observation pool, preventing changes in the sampled functions
from being confounded with changes in $N$.  Hyperparameters, spectral
thresholds, polynomial degrees, ridge values, early stopping, and checkpoint
selection use training/development or validation data only.  For frozen final
tests, predictions are generated from public sample arrays and hashed before
the private oracle archive is opened.  Final-test labels are then loaded once
for scoring; no model, threshold, architecture, or selection rule is changed
afterward.  Reported learned-method uncertainty is the sample standard
deviation over the frozen training seeds, while deterministic analytic
baselines are evaluated once on the identical functions, queries, and nested
prefixes.

\subsection{Local symmetry recovery: implementation details and complete comparison}
\label{app:local_symmetry_details}

The local benchmark consists of maps
\(F:\mathbb R^6\rightarrow\mathbb R^4\) with the rank-two involutive
distribution
\(
\mathcal D_F(x)=\ker J_F(x)
\).
Train, validation, and test splits contain $5000$, $512$, and $512$
independently generated functions.  Evaluation uses eight independently
sampled queries per validation/test function and nested observation prefixes
\(N\in\{64,128,256,512,1024,2048\}\).

\paragraph{Analytic local estimator.}
The analytic estimator performs weighted second-order local polynomial
regression around each query.  With normalized displacement
\(\delta_i=x_i-x_q\), its design contains
\([1,\delta_i,\operatorname{vech}(\delta_i\delta_i^\top)]\), i.e., 28
features in six input dimensions.  A fixed Gaussian neighborhood and a small
ridge give an analytic estimate of $J_F(x_q)$; the two-dimensional right
nullspace gives the predicted projector.

\paragraph{Amortized Local Weighting (ALW).}
ALW preserves the same polynomial regression, analytic linear solve, Jacobian
extraction, and nullspace calculation, but replaces the fixed kernel by learned
query-dependent weights.  The weight network receives normalized offsets,
radius, output differences, and cloud-level invariant summaries (mean,
variance, and skewness), together with a global DeepSet mean/max summary of
$(x_i,F(x_i))$.  A three-layer scalar MLP produces logits $s_i$ and
\[
  w_i=0.98\,\operatorname{softmax}(s)_i+\frac{0.02}{K}.
\]
The selected model uses $K=96$ neighbors and has $41{,}089$ trainable
parameters.  It is trained with AdamW (learning rate $5\times10^{-4}$, weight
decay $10^{-4}$, batch size 64, at most 30 epochs), using seed 1730.  Selection
uses validation projector error at $N=128$ and $N=512$; early stopping starts
at epoch 10 with patience 7 and $\texttt{min\_delta}=10^{-5}$.

\paragraph{Beyond-Affine.}
The closest value-only adaptation first fits a total-degree polynomial
surrogate to each sampled function and then optimizes polynomial vector fields
whose directional derivatives annihilate that surrogate.  The two coefficient
vectors in the bottom eigenspace are evaluated at the query and orthonormalized
to form a rank-two tangent projector.  Surrogate degree
\(\{2,3\}\), ridge \(\{10^{-6},10^{-4}\}\), vector-field degree
\(\{0,1,2,3\}\), and generator regularization
\(\{0,10^{-6},10^{-4}\}\) are selected using 128 validation functions only.
This is a conservative comparison: Beyond-Affine parameterizes polynomial
annihilating fields, whereas the target here is a smoothly varying involutive
rank-two distribution.

\paragraph{Direct neural projector baseline.}
The appendix baseline directly predicts a $6\times2$ frame from a pooled
set/query representation and converts it to a rank-two projector by QR
orthogonalization.  The selected global-plus-value DeepSet has approximately
$102{,}540$ parameters and is trained with AdamW (learning rate $10^{-3}$,
weight decay $10^{-4}$, batch size 64, at most 30 epochs).  Unlike ALW, it does
not retain analytic local differentiation or nullspace recovery.

\begin{table*}[t]
\centering
\caption{Local-symmetry final-test comparison.  Projector error is the mean
squared Frobenius error; angle is the median largest principal angle in
degrees.  Lower is better.}
\label{tab:local_full_comparison}
\small
\setlength{\tabcolsep}{5.0pt}
\begin{tabular}{c|rrrr|rrrr}
\toprule
& \multicolumn{4}{c|}{Mean projector error}
& \multicolumn{4}{c}{Median largest angle ($^\circ$)}\\
$N$ & ALW & AL & BA & Direct neural
    & ALW & AL & BA & Direct neural\\
\midrule
64   & .01608 & .02055 & .02132 & .26016 & 3.65 & 4.12 & 4.21 & 17.77\\
128  & .00856 & .01017 & .01717 & .21298 & 2.63 & 2.93 & 3.73 & 15.84\\
256  & .00490 & .00617 & .01588 & .18710 & 2.02 & 2.30 & 3.53 & 14.77\\
512  & .00293 & .00382 & .01529 & .17501 & 1.57 & 1.85 & 3.45 & 14.32\\
1024 & .00177 & .00233 & .01501 & .17143 & 1.24 & 1.46 & 3.40 & 14.24\\
2048 & .00107 & .00145 & .01486 & .18107 &  .97 & 1.14 & 3.38 & 14.53\\
\bottomrule
\end{tabular}
\end{table*}

ALW improves consistently over the fixed analytic kernel while retaining its
interpretable differentiation and nullspace stages.  The direct projector
network is substantially worse and saturates, indicating that amortizing the
entire geometric map is less data-efficient than learning only the local
weighting required by a correct analytic estimator.  Beyond-Affine is
competitive at the smallest prefix but plateaus because a fixed-degree global
polynomial vector field is mismatched to the continuously varying local
distribution.

\subsection{$\SL(2,\mathbb R)$ finite-sample experiment}
\label{app:sl2_experiment}

We test whether the stabilizer construction depends on compact orthogonal
representations. The benchmark contains elliptic, hyperbolic, parabolic, and
trivial stabilizers under bounded random conjugations. Functions are represented
in a finite polynomial space, and the corresponding infinitesimal
representation replaces the hyperspherical-harmonic construction used for
$\SO(n)$.

The common polynomial representation contains 28 coefficients, estimated by
ordinary least squares. At $N=16$ the system is underdetermined and
symmetry-dimension accuracy is $31.45\%$. At $N=32$, both symmetry-dimension
and conjugacy-type accuracy reach $100\%$, with projector error below
$2\times10^{-24}$. Population coefficient and infinitesimal Grams recover all
target stabilizers to numerical precision.

This experiment provides two controls. First, the representation-theoretic
stabilizer construction extends beyond compact $\SO(n)$ actions. Second,
learning is not required when the finite-sample representation can already be
recovered reliably.

\subsection{LieGAN comparison and coupled-generator recovery}
\label{app:liegan_comparison}

\paragraph{Benchmark.}
We compare against the official LieGAN implementation of
\citet{yang2023generative} on the three one-dimensional coupled-rotation
subclasses
\[
\mathfrak h_{1:1}
=
\operatorname{span}\{E_{12}+E_{34}\},\qquad
\mathfrak h_{1:2}
=
\operatorname{span}\{E_{12}+2E_{34}\},
\]
\[
\mathfrak h_{2:3}
=
\operatorname{span}\{2E_{12}+3E_{34}\}.
\]
The corresponding canonical relative rotation rates are
$1$, $2$, and $3/2$.
These classes require recovering not only a one-dimensional symmetry algebra,
but also the relative weights of the two coupled planar rotations.

LieGAN is evaluated only on these one-dimensional classes and is configured
to learn one generator per function. It therefore does not predict symmetry
dimension and is not included in the full symmetry-dimension comparison.
We instead evaluate generator angle, rank-one projector error, off-plane
leakage, relative rotation rate, infinitesimal residual, and runtime.

\paragraph{LieGAN implementation.}
We use the authors' (\cite{yang2023generative}) public implementation.
For samples
$\mathcal D_f=\{(x_i,f(x_i))\}_{i=1}^{N}$,
the input $x_i\in\mathbb R^5$ is treated as a geometric vector and
$f(x_i)$ as an invariant scalar. A learned generator $L$ acts through
\[
x_i'=\exp(zL)x_i,\qquad
y_i'=f(x_i),\qquad
z\sim\mathcal N(0,1),
\]
and a discriminator is trained to distinguish
$(x_i,f(x_i))$ from $(x_i',f(x_i))$.

We evaluate two generator parameterizations:
(i) an unconstrained $L\in\mathbb R^{5\times5}$, and
(ii) an $\SO(5)$-restricted parameterization
$L=\frac12(B-B^\top)\in\mathfrak{so}(5)$.
For the unconstrained model, the learned matrix is projected onto its
skew-symmetric component before evaluating $\mathfrak{so}(5)$ generator
geometry. The restricted variant therefore provides LieGAN with the correct
ambient transformation family.

Hyperparameters are selected on independent validation functions without
access to ground-truth generators or projectors. Both variants select
\[
\lambda_{\mathrm{cos}}=1,\qquad
\eta_G=10^{-3},\qquad
\eta_D=2\times10^{-4},
\]
with batch size $64$, $z\sim\mathcal N(0,1)$, and $100$ epochs.
Each optimization is repeated with seeds $101$, $202$, and $303$.
We additionally verify on controlled examples that the implementation recovers
simple rotations, respects generator sign and scale in the angle metric, and
assigns near-zero infinitesimal residual to the true generator.

\paragraph{Matched pilot.}
The paired pilot contains $12$ independently generated functions from each
coupled subclass, for $36$ functions in total, evaluated at
$N\in\{512,1024,2048\}$.
LieGAN is optimized independently for every function, sample count,
parameterization, and optimization seed.
The paired comparator is the frozen structured harmonic-calibration model
from Experiment~3C.
The later AEHE model is evaluated separately on its own untouched final test
and is therefore not part of this paired comparison.

\begin{table}[t]
\centering
\caption{
Matched coupled-class pilot, averaged over the three subclasses and three
optimization seeds or frozen checkpoints.
Projector error is squared Frobenius error.
}
\label{tab:liegan_pilot}
\small
\setlength{\tabcolsep}{4pt}
\begin{tabular}{r l cc}
\toprule
$N$ & Method & Mean angle $\downarrow$ & Proj.\ err.\ $\downarrow$ \\
\midrule
512
& Harmonic calibration & $36.78^\circ$ & .794 \\
& LieGAN, general      & $71.66^\circ$ & 1.307 \\
& LieGAN, $\SO(5)$     & $71.83^\circ$ & 1.307 \\
\midrule
1024
& Harmonic calibration & $34.50^\circ$ & .745 \\
& LieGAN, general      & $68.82^\circ$ & 1.269 \\
& LieGAN, $\SO(5)$     & $68.91^\circ$ & 1.266 \\
\midrule
2048
& Harmonic calibration & $21.68^\circ$ & .483 \\
& LieGAN, general      & $65.16^\circ$ & 1.217 \\
& LieGAN, $\SO(5)$     & $65.61^\circ$ & 1.217 \\
\bottomrule
\end{tabular}
\end{table}

The harmonic comparator improves steadily with sample count, whereas both
LieGAN variants remain at large generator angle.
Restricting the generator to $\mathfrak{so}(5)$ removes symmetric-matrix
degrees of freedom but does not resolve the remaining optimization ambiguity
within the rotational generator space.

\paragraph{Per-class diagnostics.}
Table~\ref{tab:liegan_per_class} reports the matched $N=2048$ results.
Generator angle and projector error are the primary geometric metrics; the
rate ratio is more sensitive because it divides by an estimated coordinate.

\begin{table}[t]
\centering
\caption{
Per-class matched results at $N=2048$.
Angle, projector error, and off-plane leakage are means; rate ratio is the
median. Ground-truth rate ratios are $1$, $2$, and $1.5$.
}
\label{tab:liegan_per_class}
\small
\setlength{\tabcolsep}{3.5pt}
\begin{tabular}{c l c c c c}
\toprule
Class & Method
& Angle $\downarrow$
& Proj.\ $\downarrow$
& Off-plane $\downarrow$
& Ratio \\
\midrule
$1{:}1$
& Harmonic cal. & $36.96^\circ$ & .769 & .517 & 1.224 \\
& LieGAN, gen.  & $63.40^\circ$ & 1.189 & .796 & .725 \\
& LieGAN, $\SO(5)$ & $66.77^\circ$ & 1.228 & .814 & .833 \\
\midrule
$1{:}2$
& Harmonic cal. & $14.25^\circ$ & .346 & .075 & 4.190 \\
& LieGAN, gen.  & $64.82^\circ$ & 1.220 & .654 & .836 \\
& LieGAN, $\SO(5)$ & $64.99^\circ$ & 1.224 & .666 & .909 \\
\midrule
$2{:}3$
& Harmonic cal. & $13.84^\circ$ & .335 & .114 & 2.184 \\
& LieGAN, gen.  & $67.22^\circ$ & 1.242 & .725 & .846 \\
& LieGAN, $\SO(5)$ & $65.08^\circ$ & 1.199 & .655 & .848 \\
\bottomrule
\end{tabular}
\end{table}

\paragraph{AEHE coupled-class recovery.}
AEHE is evaluated separately on the untouched 3E final test, containing
$224$ functions from each coupled subclass.
These results are not paired with the LieGAN pilot and should be interpreted
as a final diagnostic of the amortized estimator rather than as a direct
instance-wise comparison.

\begin{table}[t]
\centering
\caption{
AEHE coupled-class recovery on the untouched 3E final test at $N=2048$.
Projector error is squared Frobenius error.
}
\label{tab:aehe_coupled}
\small
\setlength{\tabcolsep}{4pt}
\begin{tabular}{c c c c c}
\toprule
Class
& Dim.\ acc.
& Proj.\ err.\ $\downarrow$
& Angle $\downarrow$
& Median ratio \\
\midrule
$1{:}1$ & 1.000 & .00581 & $2.46^\circ$ & 1.018 \\
$1{:}2$ & 1.000 & .00449 & $2.40^\circ$ & 2.114 \\
$2{:}3$ & 1.000 & .00238 & $1.87^\circ$ & 1.537 \\
\bottomrule
\end{tabular}
\end{table}

AEHE recovers all three coupled classes with perfect symmetry-dimension
accuracy, small projector error, and generator angles below $2.5^\circ$.
The recovered relative rotation rates are also close to their canonical values.

\paragraph{Computation.}
LieGAN requires a separate adversarial optimization for every function,
sample count, and random seed. On an NVIDIA A6000, one optimization takes
approximately $18$ s at $N=512$, $30$-$36$ s at $N=1024$, and
$59$-$72$ s at $N=2048$.
By contrast, the harmonic-calibration and AEHE models reuse frozen parameters
across functions. In the reference 3E implementation, batched AEHE inference
over $1{,}008$ functions takes approximately $0.55$--$0.74$ s for
$N=64$-$2048$, excluding one-time data loading. The computational distinction follows directly from the inference strategy:  LieGAN performs test-time optimization for each function, whereas AEHE
amortizes the map $\mathcal D_f\longmapsto\widehat{\mathfrak h}_f$ over the function family.